\documentclass{article}

\usepackage{arxiv}

\usepackage[utf8]{inputenc}
\usepackage[T1]{fontenc}
\usepackage[scaled=.98]{XCharter}
\usepackage[scaled=.95]{helvet}
\usepackage[scaled=1.1]{zlmtt}
\usepackage{amsmath}
\usepackage{amssymb}
\usepackage[uprightscript,charter,vvarbb,scaled=1.05]{newtxmath}
\usepackage[hyphens]{url}
\usepackage[table]{xcolor}
\definecolor{LinkBlue}{HTML}{2457C5}
\definecolor{AbstractGray}{HTML}{F3F5F7}
\definecolor{AbstractBorder}{HTML}{DDE3EA}
\usepackage[
  colorlinks=true,
  linkcolor=LinkBlue,
  citecolor=LinkBlue,
  urlcolor=LinkBlue
]{hyperref}
\usepackage{graphicx}
\usepackage{fontawesome5}
\usepackage{booktabs}
\usepackage{tabularx}
\usepackage{microtype}
\usepackage{enumitem}
\usepackage{etoolbox}
\usepackage[numbers,sort&compress]{natbib}
\usepackage{doi}
\usepackage[most]{tcolorbox}

\newtcolorbox{preprintabstract}{
  enhanced,
  breakable,
  colback=AbstractGray,
  colframe=AbstractBorder,
  boxrule=0.35pt,
  arc=3pt,
  left=12pt,
  right=12pt,
  top=5pt,
  bottom=9pt,
  before skip=10pt,
  after skip=16pt,
  title={Abstract},
  fonttitle=\large\bfseries\sffamily,
  coltitle=black,
  colbacktitle=AbstractGray,
  halign title=center,
  titlerule=0pt
}

\renewenvironment{abstract}
  {\begin{preprintabstract}\fontsize{10pt}{12.4pt}\selectfont}
  {\end{preprintabstract}}

\makeatletter
\renewenvironment{table}
  {\@float{table}}
  {\end@float}
\makeatother

\pdftrailerid{}

\newcommand{\NFastmatmulWThreeOhNineQThirtyTwoOutlierRatio}{1057.4}
\newcommand{\NFastmatmulWThreeOhNineQThirtyTwoGaugeCount}{512}
\newcommand{\NFastmatmulWThreeOhNineQThirtyTwoCatastrophicCount}{8}
\newcommand{\NFastmatmulWThreeOhNineMistralSevenOutlierRatio}{1553.4}
\newcommand{\NFastmatmulWThreeOhNineMistralSevenGaugeCount}{512}
\newcommand{\NFastmatmulWThreeOhNineMistralSevenCatastrophicCount}{0}
\newcommand{\NFastmatmulWThreeOhNineLlamaTwoSevenOutlierRatio}{1760.5}
\newcommand{\NFastmatmulWThreeOhNineLlamaTwoSevenGaugeCount}{512}
\newcommand{\NFastmatmulWThreeOhNineLlamaTwoSevenCatastrophicCount}{0}
\newcommand{\NIclrLadderQfourteenAccPctCintThirtyOne}{35.00}
\newcommand{\NIclrTwinLlEightEqual}{36,288}
\newcommand{\NIclrTwinOlSevenEqual}{36,288}
\newcommand{\NIclrTwinQThreeEightEqual}{40,824}
\newcommand{\NIclrTwinQFourteenEqual}{15,456}
\newcommand{\NIclrWThreeEightNineQThreeEightTstarClean}{1.30}
\newcommand{\NIclrWThreeEightNineQThreeEightNllCleanTOne}{2.129}
\newcommand{\NIclrActTwoQOneFourObqaFillerDnatLibrary}{37.92}
\newcommand{\NIclrActTwoQOneFourObqaFillerDnatLibraryLo}{31.67}
\newcommand{\NIclrActTwoQOneFourObqaFillerDnatLibraryHi}{44.17}
\newcommand{\NIclrActTwoQOneFourObqaFillerDnatOptTwo}{7.92}
\newcommand{\NIclrActTwoQOneFourObqaFillerDnatOptTwoLo}{4.58}
\newcommand{\NIclrActTwoQOneFourObqaFillerDnatOptTwoHi}{11.67}
\newcommand{\NIclrActTwoQOneFourObqaFillerDnatSxc}{7.08}
\newcommand{\NIclrActTwoQOneFourObqaFillerDnatSxcLo}{4.17}
\newcommand{\NIclrActTwoQOneFourObqaFillerDnatSxcHi}{10.42}
\newcommand{\NIclrActTwoQOneFourObqaFillerDnatClean}{0.00}
\newcommand{\NIclrActTwoQOneFourObqaFillerDnatCubic}{0.00}
\newcommand{\NIclrActTwoQOneFourObqaPlausibleDnatLibrary}{34.58}
\newcommand{\NIclrActTwoQOneFourObqaPlausibleDnatLibraryLo}{28.75}
\newcommand{\NIclrActTwoQOneFourObqaPlausibleDnatLibraryHi}{40.83}
\newcommand{\NIclrActTwoQOneFourObqaPlausibleDnatSxc}{5.83}
\newcommand{\NIclrActTwoQOneFourObqaPlausibleDnatSxcLo}{2.92}
\newcommand{\NIclrActTwoQOneFourObqaPlausibleDnatSxcHi}{8.75}
\newcommand{\NIclrActTwoQOneFourObqaPlausibleDnatClean}{0.00}
\newcommand{\NIclrActTwoQOneFourObqaPlausibleDnatCubic}{0.00}
\newcommand{\NIclrActTwoQOneFourArcFillerDnatLibrary}{60.00}
\newcommand{\NIclrActTwoQOneFourArcFillerDnatLibraryLo}{53.75}
\newcommand{\NIclrActTwoQOneFourArcFillerDnatLibraryHi}{66.25}
\newcommand{\NIclrActTwoQOneFourArcFillerDnatOptTwo}{12.08}
\newcommand{\NIclrActTwoQOneFourArcFillerDnatOptTwoLo}{8.33}
\newcommand{\NIclrActTwoQOneFourArcFillerDnatOptTwoHi}{16.25}
\newcommand{\NIclrActTwoQOneFourArcFillerDnatSxc}{6.67}
\newcommand{\NIclrActTwoQOneFourArcFillerDnatSxcLo}{3.75}
\newcommand{\NIclrActTwoQOneFourArcFillerDnatSxcHi}{10.00}
\newcommand{\NIclrActTwoQOneFourArcFillerDnatClean}{0.00}
\newcommand{\NIclrActTwoQOneFourArcFillerDnatCubic}{0.00}
\newcommand{\NIclrActOneQOneFourOrbitSpread}{772.4}
\newcommand{\NIclrActOneQOneFourOrbitMin}{1.367}
\newcommand{\NIclrActOneQOneFourOrbitMedian}{1.425}
\newcommand{\NIclrActOneQOneFourOrbitMax}{1056.0}
\newcommand{\NIclrActOneQOneFourOrbitIdentity}{63.11}
\newcommand{\NIclrActOneQOneFourOrbitN}{512}
\newcommand{\NIclrActOneQOneFourEdgeMaxFactor}{761.0}
\newcommand{\NIclrActOneSevenQThreebMin}{1.124}
\newcommand{\NIclrActOneSevenQThreebMedian}{1.191}
\newcommand{\NIclrActOneSevenQThreebMax}{68.10}
\newcommand{\NIclrActOneSevenQThreebSpread}{60.59}
\newcommand{\NIclrActOneSevenQThreebClassicalFpEightNone}{1.0068}
\newcommand{\NIclrActOneSevenQThreebNAboveTwo}{41}
\newcommand{\NIclrActOneSevenQSevenbMin}{1.099}
\newcommand{\NIclrActOneSevenQSevenbMedian}{1.128}
\newcommand{\NIclrActOneSevenQSevenbMax}{1.18}
\newcommand{\NIclrActOneSevenQSevenbSpread}{1.07}
\newcommand{\NIclrActOneSevenQSevenbClassicalFpEightNone}{1.0054}
\newcommand{\NIclrActOneSevenQSevenbNAboveTwo}{0}
\newcommand{\NIclrActOneSevenQOneFourbClassicalFpEightNone}{1.0138}
\newcommand{\NIclrActOneSevenQOneFourbNAboveTwo}{137}
\newcommand{\NIclrActOneSevenQThreexEightbMin}{1.040}
\newcommand{\NIclrActOneSevenQThreexEightbMedian}{1.105}
\newcommand{\NIclrActOneSevenQThreexEightbMax}{6.15}
\newcommand{\NIclrActOneSevenQThreexEightbSpread}{5.91}
\newcommand{\NIclrActOneSevenQThreexEightbClassicalFpEightNone}{1.0014}
\newcommand{\NIclrActOneSevenQThreexEightbNAboveTwo}{14}
\newcommand{\NIclrActOneSevenLlEightbMin}{1.262}
\newcommand{\NIclrActOneSevenLlEightbMedian}{1.377}
\newcommand{\NIclrActOneSevenLlEightbMax}{2.89}
\newcommand{\NIclrActOneSevenLlEightbSpread}{2.29}
\newcommand{\NIclrActOneSevenLlEightbClassicalFpEightNone}{1.0060}
\newcommand{\NIclrActOneSevenLlEightbNAboveTwo}{5}
\newcommand{\NIclrActOneSevenOlSevenbMin}{1.559}
\newcommand{\NIclrActOneSevenOlSevenbMedian}{1.629}
\newcommand{\NIclrActOneSevenOlSevenbMax}{1.73}
\newcommand{\NIclrActOneSevenOlSevenbSpread}{1.11}
\newcommand{\NIclrActOneSevenOlSevenbClassicalFpEightNone}{1.0023}
\newcommand{\NIclrActOneSevenOlSevenbNAboveTwo}{0}
\newcommand{\NIclrActOneSevenYiSixbMin}{1.120}
\newcommand{\NIclrActOneSevenYiSixbMedian}{1.164}
\newcommand{\NIclrActOneSevenYiSixbMax}{1.25}
\newcommand{\NIclrActOneSevenYiSixbSpread}{1.12}
\newcommand{\NIclrActOneSevenYiSixbClassicalFpEightNone}{1.0009}
\newcommand{\NIclrActOneSevenYiSixbNAboveTwo}{0}
\newcommand{\NIclrActTwoQOneFourObqaPlausibleDnatOptTwo}{10.00}
\newcommand{\NIclrActTwoQOneFourObqaPlausibleDnatOptTwoLo}{6.25}
\newcommand{\NIclrActTwoQOneFourObqaPlausibleDnatOptTwoHi}{13.75}
\newcommand{\NIclrActTwoQOneFourObqaPlausibleWThreeRate}{7.2}
\newcommand{\NIclrActTwoQOneFourObqaPlausibleWThreeNEqual}{69}
\newcommand{\NIclrActTwoQOneFourObqaPlausibleWThreeNScored}{960}
\newcommand{\NIclrActTwoQOneFourObqaItems}{240}
\newcommand{\NIclrActTwoQOneFourObqaPlausibleAccPctClean}{34.58}
\newcommand{\NIclrActTwoQOneFourObqaPlausibleAccPctSxc}{35.42}
\newcommand{\NIclrActTwoQOneFourObqaPlausibleAccPctOptTwo}{32.92}
\newcommand{\NIclrActFourStrassenTwoHeadroomLB}{4}
\newcommand{\NIclrActFourStrassenTwoHeadroomBB}{31}
\newcommand{\NIclrActFourStrassenTwoHeadroomLW}{16}
\newcommand{\NIclrActFourStrassenTwoExactTrials}{200}
\newcommand{\NIclrActFourStrassenTwoExactMaxLeafKIntThreeTwo}{8729}
\newcommand{\NIclrActFourStrassenTwoHeadroomLA}{4}
\newcommand{\NIclrActFourStrassenTwoHeadroomBA}{31}
\newcommand{\NIclrActFourStrassenTwoGaugesN}{512}
\newcommand{\NIclrActFourStrassenTwoGaugesIntDistinct}{1}
\newcommand{\NIclrActFourStrassenTwoGaugesEFourmThreeExact}{0}
\newcommand{\NIclrActFourStrassenTwoGaugesEFourmThreeLotteryRatio}{3.17}
\newcommand{\NIclrActFourStrassenTwoScheduleGauges}{64}
\newcommand{\NIclrActFourStrassenTwoScheduleIntDistinct}{1}
\newcommand{\NIclrActFourStrassenTwoPrefixPairs}{496}
\newcommand{\NIclrActFourStrassenTwoPrefixMovedInt}{0}
\newcommand{\NIclrActFourStrassenTwoPrefixMovedFpEight}{48}
\newcommand{\NIclrActFourStrassenTwoPrefixMovedFinegroupControl}{496}
\newcommand{\NIclrActFourStrassenTwoMultsRatio}{0.7656}
\newcommand{\NIclrActFourStrassenTwoMultsFast}{49}
\newcommand{\NIclrActFourStrassenTwoMultsClassical}{64}
\newcommand{\NIclrCostStrassenOneQGOneTwoEight}{2.04}
\newcommand{\NIclrCostStrassenOneQGTwoFiveSix}{3.17}
\newcommand{\NIclrCostStrassenOneQGFiveOneTwo}{2.09}
\newcommand{\NIclrCostTwoLevelVsClassical}{27.9}
\newcommand{\NIclrCostExecutorFloor}{2.25}
\newcommand{\NIclrCostOrderBandCxSPooled}{1.97}
\newcommand{\NIclrCostOrderBandSxSPooled}{1.46}
\newcommand{\NIclrCostRegretCxS}{1.000}
\newcommand{\NIclrCostRegretSxS}{1.000}
\newcommand{\NIclrCostRegretNFresh}{200}
\newcommand{\NIclrActOneCFourEightEightRatio}{0.985}
\newcommand{\NIclrActOneCFourEightEightCiLo}{0.863}
\newcommand{\NIclrActOneCFourEightEightCiHi}{1.141}
\newcommand{\NIclrActOneCFourSevenSevenSpan}{14.90}
\newcommand{\NIclrActOneCFourSevenSevenMin}{4.6080}
\newcommand{\NIclrActOneCFourSevenSevenMax}{68.6687}
\newcommand{\NIclrActOneCThreeNineTwoSpread}{1.00459}
\newcommand{\NIclrActTwoExemplarsNMoved}{14}
\newcommand{\NIclrActTwoExemplarsControlsQuiet}{14}
\newcommand{\NIclrActTwoExemplarsNaturalIsGold}{3}
\newcommand{\NIclrActTwoExemplarsNeitherIsGold}{9}
\newcommand{\NIclrActTwoExemplarsBlindIsGold}{2}
\newcommand{\NIclrCostAnchorVendorUs}{815}
\newcommand{\NIclrCostAnchorClassicalUsMin}{563}
\newcommand{\NIclrCostAnchorClassicalUsMax}{774}
\newcommand{\NIclrCostCorrOneTwoSevenSparseMin}{0.79}
\newcommand{\NIclrCostCorrOneTwoSevenSparseMax}{0.82}
\newcommand{\NIclrCostCorrOneTwoSevenGatheredMin}{1.43}
\newcommand{\NIclrCostCorrOneTwoSevenGatheredMax}{1.71}
\newcommand{\NIclrActOneOrbitFourFourFourSplitHalfPmOne}{0.121}
\newcommand{\NIclrActOneOrbitFourFourFourSplitHalfZeroOne}{0.9997}
\newcommand{\NIclrActOneRotationUniformDcPlain}{2.137}
\newcommand{\NIclrActOneRotationUniformDcRotated}{1.300}
\newcommand{\NIclrActOneRotationOutlierPlain}{1.240}
\newcommand{\NIclrActOneRotationOutlierRotated}{1.317}
\newcommand{\NIclrActFourStepRatioStrassenSqPredicted}{4.0968}
\newcommand{\NIclrActTwoQOneFourObqaPlausibleAccPctCubic}{33.75}
\newcommand{\NIclrActTwoQOneFourObqaFillerAccPctLibrary}{18.75}
\newcommand{\NIclrActTwoQOneFourArcFillerAccPctClean}{84.17}
\newcommand{\NIclrActTwoQOneFourArcFillerAccPctCubic}{83.33}
\newcommand{\NIclrActTwoQOneFourArcFillerAccPctLibrary}{39.58}
\newcommand{\NIclrActTwoQOneFourArcFillerAccPctOptTwo}{80.83}
\newcommand{\NIclrActTwoQOneFourArcFillerAccPctSxc}{82.50}
\newcommand{\NIclrActTwoCFourSixThreeUnrepairedFlipPctMin}{18.2}
\newcommand{\NIclrActTwoCFourSixThreeUnrepairedFlipPctMax}{40.9}
\newcommand{\NIclrActTwoCFourSixThreeRepairedFlipPctMin}{7.8}
\newcommand{\NIclrActTwoCFourSixThreeRepairedFlipPctMax}{15.2}
\newcommand{\NIclrActTwoCFourSixThreeControlsFlips}{0}
\newcommand{\NIclrActTwoCFourSixThreeControlsPositions}{1,305,600}
\newcommand{\NIclrActTwoCFourSixFourPairs}{768}
\newcommand{\NIclrActTwoCFourSixFourFastTvChangedMin}{768}
\newcommand{\NIclrActTwoCFourSixFourRepairedGreedyPctMin}{11.9}
\newcommand{\NIclrActTwoCFourSixFourRepairedGreedyPctMax}{13.0}
\newcommand{\NIclrActFourCatalogueN}{14,236}
\newcommand{\NIclrActFourCatalogueLAMin}{7}
\newcommand{\NIclrActFourCatalogueBABest}{18}
\newcommand{\NIclrWThreeEightNineQFourteenNllCleanTOne}{1.311}
\newcommand{\NIclrWThreeEightNineLlEightNllCleanTOne}{1.871}
\newcommand{\NIclrWThreeEightNineOlSevenNllCleanTOne}{2.078}
\newcommand{\NIclrQualityQFourteenCubicRaw}{13.7}
\newcommand{\NIclrQualityQFourteenCubicAdj}{13.6}
\newcommand{\NIclrQualityQFourteenCintThirtyOneRaw}{60.7}
\newcommand{\NIclrQualityQFourteenCintThirtyOneAdj}{59.8}
\newcommand{\NIclrQualityQFourteenCintOneTwentySevenRaw}{3.2}
\newcommand{\NIclrQualityQFourteenCintOneTwentySevenAdj}{3.3}
\newcommand{\NIclrQualityLlEightCubicRaw}{5.9}
\newcommand{\NIclrQualityLlEightCubicAdj}{5.9}
\newcommand{\NIclrQualityLlEightCintThirtyOneRaw}{14.5}
\newcommand{\NIclrQualityLlEightCintThirtyOneAdj}{14.5}
\newcommand{\NIclrQualityLlEightCintOneTwentySevenRaw}{1.8}
\newcommand{\NIclrQualityLlEightCintOneTwentySevenAdj}{1.8}
\newcommand{\NIclrQualityOlSevenCubicRaw}{2.3}
\newcommand{\NIclrQualityOlSevenCubicAdj}{1.9}
\newcommand{\NIclrQualityOlSevenCintThirtyOneRaw}{11.0}
\newcommand{\NIclrQualityOlSevenCintThirtyOneAdj}{13.3}
\newcommand{\NIclrQualityOlSevenCintOneTwentySevenRaw}{-0.4}
\newcommand{\NIclrQualityOlSevenCintOneTwentySevenAdj}{-0.1}
\newcommand{\NIclrQualityQThreeEightCubicRaw}{1.4}
\newcommand{\NIclrQualityQThreeEightCubicAdj}{2.5}
\newcommand{\NIclrQualityQThreeEightCintThirtyOneRaw}{-5.5}
\newcommand{\NIclrQualityQThreeEightCintThirtyOneAdj}{18.9}
\newcommand{\NIclrQualityQThreeEightCintOneTwentySevenRaw}{0.0}
\newcommand{\NIclrQualityQThreeEightCintOneTwentySevenAdj}{-0.3}
\newcommand{\NIclrActOneOrbitFourFourFourPhi}{1,024}
\newcommand{\NIclrActOneOrbitFourFourFourNnz}{432}
\newcommand{\NIclrActOneOrbitFourFourFourMaxAbs}{1}
\newcommand{\NIclrActOneSevenQThreebN}{512}
\newcommand{\NIclrActOneSevenQSevenbN}{508}
\newcommand{\NIclrActOneSevenQOneFourbN}{512}
\newcommand{\NIclrActOneSevenQThreexEightbN}{509}
\newcommand{\NIclrActOneSevenLlEightbN}{512}
\newcommand{\NIclrActOneSevenOlSevenbN}{512}
\newcommand{\NIclrActOneSevenYiSixbN}{512}
\newcommand{\NIclrActOneCFourEightZeroRelSpread}{0.0}
\newcommand{\NIclrActTwoCFourSixThreeQOneFourUnrepairedFlipPct}{22.1}
\newcommand{\NIclrActTwoCFourSixThreeQOneFourRepairedOneFlipPct}{8.2}
\newcommand{\NIclrActTwoCFourSixThreeQOneFourRepairedTwoFlipPct}{12.7}
\newcommand{\NIclrActTwoCFourSixThreeQThreeUnrepairedFlipPct}{21.3}
\newcommand{\NIclrActTwoCFourSixThreeQThreeRepairedOneFlipPct}{9.9}
\newcommand{\NIclrActTwoCFourSixThreeQThreeRepairedTwoFlipPct}{14.6}
\newcommand{\NIclrActTwoCFourSixThreeLlEightUnrepairedFlipPct}{24.8}
\newcommand{\NIclrActTwoCFourSixThreeLlEightRepairedOneFlipPct}{9.4}
\newcommand{\NIclrActTwoCFourSixThreeLlEightRepairedTwoFlipPct}{15.2}
\newcommand{\NIclrActTwoCFourSixThreeOlSevenUnrepairedFlipPct}{40.9}
\newcommand{\NIclrActTwoCFourSixThreeOlSevenRepairedOneFlipPct}{10.0}
\newcommand{\NIclrActTwoCFourSixThreeOlSevenRepairedTwoFlipPct}{15.0}
\newcommand{\NIclrActTwoCFourSixThreeQThreeEightUnrepairedFlipPct}{18.2}
\newcommand{\NIclrActTwoCFourSixThreeQThreeEightRepairedOneFlipPct}{7.8}
\newcommand{\NIclrActTwoCFourSixThreeQThreeEightRepairedTwoFlipPct}{12.7}
\newcommand{\NIclrActTwoCFourSixFourQOneFourPairs}{208}
\newcommand{\NIclrActTwoCFourSixFourQOneFourUnrepairedGreedyPct}{18.3}
\newcommand{\NIclrActTwoCFourSixFourQOneFourRepairedTwoGreedyPct}{13.0}
\newcommand{\NIclrActTwoCFourSixFourOlSevenPairs}{240}
\newcommand{\NIclrActTwoCFourSixFourOlSevenUnrepairedGreedyPct}{29.6}
\newcommand{\NIclrActTwoCFourSixFourOlSevenRepairedTwoGreedyPct}{12.9}
\newcommand{\NIclrActTwoCFourSixFourQThreePairs}{320}
\newcommand{\NIclrActTwoCFourSixFourQThreeUnrepairedGreedyPct}{20.9}
\newcommand{\NIclrActTwoCFourSixFourQThreeRepairedTwoGreedyPct}{11.9}
\newcommand{\NIclrActOneOrbitFourFourFourSpreadPmOne}{1.005}
\newcommand{\NIclrActOneOrbitFourFourFourSpreadZeroOne}{2.59}
\newcommand{\NIclrTokObqaGreedyRepairOnePct}{80.14}
\newcommand{\NIclrTokObqaGreedyRepairOneMean}{0.186}
\newcommand{\NIclrTokObqaGreedyRepairOneMedian}{0.097}
\newcommand{\NIclrTokObqaGreedyRepairOnePninety}{0.491}
\newcommand{\NIclrTokObqaGreedyRepairTwoPct}{89.00}
\newcommand{\NIclrTokObqaGreedyRepairTwoMean}{0.297}
\newcommand{\NIclrTokObqaGreedyRepairTwoMedian}{0.179}
\newcommand{\NIclrTokObqaGreedyRepairTwoPninety}{0.727}
\newcommand{\NIclrTokObqaFillerRepairOnePct}{90.15}
\newcommand{\NIclrTokObqaFillerRepairOneMean}{0.190}
\newcommand{\NIclrTokObqaFillerRepairOneMedian}{0.101}
\newcommand{\NIclrTokObqaFillerRepairOnePninety}{0.491}
\newcommand{\NIclrTokObqaFillerRepairTwoPct}{99.88}
\newcommand{\NIclrTokObqaFillerRepairTwoMean}{0.310}
\newcommand{\NIclrTokObqaFillerRepairTwoMedian}{0.187}
\newcommand{\NIclrTokObqaFillerRepairTwoPninety}{0.762}
\newcommand{\NIclrTokArcFillerRepairOnePct}{99.72}
\newcommand{\NIclrTokArcFillerRepairOneMean}{0.459}
\newcommand{\NIclrTokArcFillerRepairOneMedian}{0.204}
\newcommand{\NIclrTokArcFillerRepairOnePninety}{1.245}
\newcommand{\NIclrTokArcFillerRepairTwoPct}{99.72}
\newcommand{\NIclrTokArcFillerRepairTwoMean}{0.695}
\newcommand{\NIclrTokArcFillerRepairTwoMedian}{0.320}
\newcommand{\NIclrTokArcFillerRepairTwoPninety}{1.884}
\newcommand{\NIclrTokObqaCount}{3308}
\newcommand{\NIclrTokArcCount}{4315}
\newcommand{\NIclrTokControlChanged}{0}
\newcommand{\NIclrAccQfourteenIntFull}{34.17}
\newcommand{\NIclrAccLlamaBf}{35.42}
\newcommand{\NIclrAccLlamaFp}{35.00}
\newcommand{\NIclrAccLlamaIntLow}{35.00}
\newcommand{\NIclrAccLlamaIntFull}{35.00}
\newcommand{\NIclrAccOlmoBf}{37.92}
\newcommand{\NIclrAccOlmoFp}{39.17}
\newcommand{\NIclrAccOlmoIntLow}{37.92}
\newcommand{\NIclrAccOlmoIntFull}{38.75}
\newcommand{\NIclrAccQthreeBf}{31.67}
\newcommand{\NIclrAccQthreeFp}{33.75}
\newcommand{\NIclrAccQthreeIntLow}{32.08}
\newcommand{\NIclrAccQthreeIntFull}{31.67}

\usepackage{array}
\usepackage{caption}
\usepackage{placeins}
\makeatletter
\renewenvironment{table*}{\@float{table}}{\end@float}
\renewenvironment{figure*}{\@float{figure}}{\end@float}
\makeatother

\definecolor{exmuted}{gray}{0.42}
\newcommand{\exans}[1]{\textit{#1}}
\newcommand{\exok}{\,\checkmark}
\newcommand{\exgroup}[1]{\textit{#1}}
\newlength{\exitemw}
\newlength{\excorrw}
\newlength{\exmidw}
\newcommand{\exspan}[1]{\multicolumn{2}{>{\raggedright\arraybackslash}p{\dimexpr\exmidw+\excorrw+2\tabcolsep\relax}@{}}{#1}}
\newtheorem{theorem}{Theorem}
\newtheorem{proposition}{Proposition}
\newtheorem{lemma}{Lemma}
\newenvironment{proof}[1][Proof]{\par\noindent\textit{#1.}\ \ignorespaces}{\unskip\nobreak\hfill$\square$\par\medskip}

\newcommand{\paperstatus}{Preprint}
\newcommand{\dezhicorrespondenceemail}{dezhiran@pku.edu.cn}
\newcommand{\taocorrespondenceemail}{taoxie@pku.edu.cn}
\title{\normalfont\sffamily\bfseries Beyond Accuracy: Prefix-Invariant Realizations of \\Low-Precision Fast Matrix Multiplication}
\newcommand{\authornamefont}{\fontsize{11.8pt}{14.2pt}\selectfont\sffamily\bfseries}
\newcommand{\affiliationfont}{\fontsize{10.2pt}{12.4pt}\selectfont\normalfont}
\newlength{\authorrowbreakheight}
\makeatletter
\patchcmd{\@maketitle}
  {\begin{tabular}[t]{c}\bf\rule{\z@}{24\p@}\ignorespaces}
  {\begin{tabular}[t]{c}\authornamefont\rule{\z@}{24\p@}\ignorespaces}
  {}{\PackageWarning{arxiv-paper-template}{Could not patch the first author-row break}}
\patchcmd{\@maketitle}
  {\begin{tabular}[t]{c}\bf\rule{\z@}{24\p@}\ignorespaces}
  {\begin{tabular}[t]{c}\authornamefont\rule{\z@}{\authorrowbreakheight}\ignorespaces}
  {}{\PackageWarning{arxiv-paper-template}{Could not patch the second author-row break}}
\makeatother
\newcommand{\resourcelink}[2]{%
  \href{#1}{\normalfont\mdseries\nolinkurl{#2}}%
}

\author{
  \authornamefont
  \textbf{Shuxiao Xie}\textsuperscript{1,2}\thanks{Equal contribution.}
  \quad
  \textbf{Shuyang Xie}\textsuperscript{3}\footnotemark[1]
  \quad
  \textbf{Yuan Cao}\textsuperscript{1,4}
  \AND
  \textbf{Dezhi Ran}\textsuperscript{1}\thanks{Corresponding authors:
  \resourcelink{mailto:\dezhicorrespondenceemail}{\dezhicorrespondenceemail} and
  \resourcelink{mailto:\taocorrespondenceemail}{\taocorrespondenceemail}.}
  \quad
  \textbf{Wei Yang}\textsuperscript{2}
  \quad
  \textbf{Tao Xie}\textsuperscript{1,2,4,5}\footnotemark[2]
  \\[0.7em]
  \affiliationfont
  \textsuperscript{1}Beijing Tongming Lake Information Technology Application
  Innovation Center (TLAIC), China
  \\
  \textsuperscript{2}Fudan University Institute of Systems for Advanced Computing, China
  \\
  \textsuperscript{3}Harbin Institute of Technology, China
  \\
  \textsuperscript{4}Key Lab of HCST (PKU), MOE; SCS, Peking University, Beijing, China
  \\
  \textsuperscript{5}Shanghai Institute of Systems for Open Computing, China
}

\date{}
\renewcommand{\headeright}{\paperstatus}
\renewcommand{\undertitle}{\paperstatus}
\renewcommand{\shorttitle}{Beyond Accuracy}
\hypersetup{pdftitle={Beyond Accuracy: Prefix-Invariant Realizations of Low-Precision Fast Matrix Multiplication},pdfsubject={Public preprint},pdfauthor={Shuxiao Xie, Shuyang Xie, Yuan Cao, Dezhi Ran, Wei Yang, Tao Xie}}

\begin{document}

\maketitle

\begin{abstract}
Fast matrix multiplication saves multiplications through exact cancellation, but rounding sums that mix token rows can leave contributions from later tokens in earlier language model outputs. This threatens prefix invariance, which multiple-choice likelihood scoring relies on: a scored likelihood must depend only on its allowed prefix. On Qwen2.5-14B-Instruct, two fast FP8 realizations repaired to ordinary-looking accuracy still change the answers chosen by likelihood on \NIclrActTwoQOneFourObqaPlausibleDnatSxc{}\% and \NIclrActTwoQOneFourObqaPlausibleDnatOptTwo{}\% of \NIclrActTwoQOneFourObqaItems{} OpenBookQA items when only the text after the allowed prefix is replaced with the bf16 model's own greedy continuation. Both row-local controls, the bf16 model and a deployed FP8 matrix multiplication kernel, change none. Accuracy thus does not certify prefix invariance, and the stability criteria we analyze cannot tell realizations apart: across all \NIclrActOneQOneFourOrbitN{} sign variants of two-level Strassen they stay constant while teacher-forced perplexities span a \NIclrActOneQOneFourOrbitSpread{}$\times$ range on the same model. We therefore construct certified realizations of two-level Strassen on bounded integer codes that quantize token rows independently, then mix and cancel exactly before rescaling, using \NIclrActFourStrassenTwoMultsFast{} block multiplications instead of \NIclrActFourStrassenTwoMultsClassical{}. Our certificate guarantees bitwise equality to a prescribed row-local classical int8 operator at the same quantization specification, so every certified realization inherits its prefix invariance. Certification thus turns realization choice into a pure cost decision: which certified realization runs can no longer change a single scored likelihood.
\end{abstract}

\section{Introduction}\label{sec:intro}

Fast matrix multiplication saves multiplications: it multiplies sums of matrix blocks and lets the unwanted cross terms cancel exactly \citep{strassen1969}.
Low-precision kernels for large language model (LLM) inference now run such algorithms \citep{Jangda_2026,zhu2026falcongemm}, and in a linear layer the rows of the activation matrix are token positions, so these block sums add rows of different tokens.
In exact arithmetic the cross terms cancel and each output row depends only on its own input row; after rounding they can fail to cancel \citep{malik2019randomization}, and a later token can then contribute to an earlier token's output.
Such a contribution arises inside a linear layer, where the causal attention mask has no effect.
It violates prefix invariance \citep{kim2026mask}: what a model computes at a position must depend only on its allowed prefix, the text of its own sequence up to that position.
Likelihood-based evaluation relies on this property: it can score all tokens of a candidate answer in one forward pass, and the likelihood of each must depend only on the text before it.

\begin{figure}[t]
\centering
\begingroup
\makeatletter
\let\Gread@transgrouptrue\Gread@transgroupfalse
\makeatother
\includegraphics[width=\textwidth]{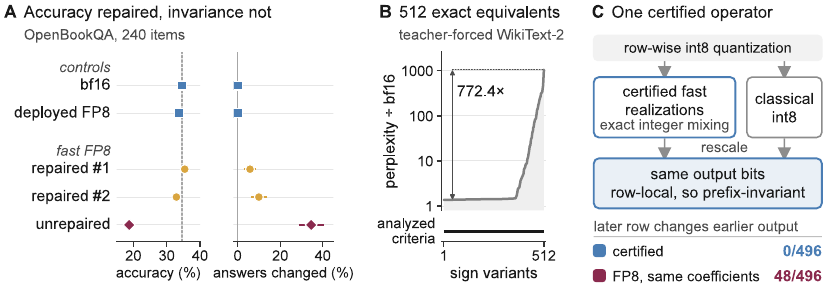}
\endgroup
\caption{\textbf{A}~Accuracy on the original input, and answers changed (95\% intervals) when the text after the allowed prefix is replaced by the bf16 model's greedy continuation; \#1, \#2: two fast algorithms, each with both repairs. \textbf{B}~Perplexity of the \NIclrActOneQOneFourOrbitN{} sign variants of two-level Strassen under one FP8 schedule, sorted, and the criteria we analyze. \textbf{C}~A certified realization and classical int8; strip: synthetic row pairs in which a later row changes an earlier output (FP8: our schedule, same coefficients). A, B: Qwen2.5-14B-Instruct.}
\label{fig:overview}
\end{figure}

The natural first check on a fast low-precision realization, an algorithm's coefficients together with how it is quantized and executed, is the model's accuracy.
In 8-bit floating point (FP8), fast algorithms can lose much of that accuracy (Figure~\ref{fig:overview}A), and two repairs adapted from prior work aim to restore it by changing how the algorithm is quantized \citep{Dumitrescu_1998,ballard2015improving,nagel2020down}.
With both repairs, FP8 realizations of two different fast algorithms bring Qwen2.5-14B-Instruct \citep{qwen25} to \NIclrActTwoQOneFourObqaPlausibleAccPctSxc\% and \NIclrActTwoQOneFourObqaPlausibleAccPctOptTwo\% accuracy on OpenBookQA \citep{mihaylov2018can}, against \NIclrActTwoQOneFourObqaPlausibleAccPctClean\% for the model in bfloat16 (bf16).
Yet when only the text after the allowed prefix is replaced by the bf16 model's own greedy continuation of that prefix, with the original answer still the one scored, they change the answer chosen by likelihood on \NIclrActTwoQOneFourObqaPlausibleDnatSxc\% and \NIclrActTwoQOneFourObqaPlausibleDnatOptTwo\% of \NIclrActTwoQOneFourObqaItems{} items, respectively (Figure~\ref{fig:overview}A)\ifhmode\unskip\fi.
Two row-local controls, the bf16 model and a deployed FP8 matrix multiplication kernel, change none.
The allowed prefix, input shapes and execution plan are unchanged, so the changed answers expose a dependence on text the answer must not see.
The same kind of dependence crosses requests packed into one forward pass, so prefix invariance matters for LLM serving as well as for evaluation.

Since accuracy did not reveal these violations and rounding can produce them, the natural next check is the numerical analysis used to choose among fast algorithms.
It compares them by stability criteria computed from their coefficients, such as the parameters of a bound on rounding error \citep{ballard2015improving} or a measure of expected error \citep{xie2026fast}.
The criteria we analyze read only coefficient magnitudes.
Changing an algorithm's coefficient signs in ways that cancel out leaves the exact product unchanged \citep{malik2019randomization}; two-level Strassen, Strassen's algorithm applied to its own blocks, has \NIclrActOneQOneFourOrbitN{} such sign variants, and each criterion gives all of them one value \citep{malik2019randomization,dumas-automated-2025}.
Under the same FP8 schedule with block scaling, however, their teacher-forced perplexity on the same model spans a \NIclrActOneQOneFourOrbitSpread{}$\times$ range (Figure~\ref{fig:overview}B).

Accuracy did not flag the violations above, and the criteria we analyze did not even separate realizations whose quantized behavior differs widely.
A test like ours can expose a violation but not prove its absence, so a guarantee that a realization keeps prefix invariance has to come from how it is built.
The most direct construction forbids every intermediate product that feeds an output row from depending on a later row, but we prove that this forces the classical number of multiplications (Section~\ref{sec:obstruction}), which gives up the saving.
A realization that keeps the saving must therefore contain products that mix later rows into earlier ones, and prefix invariance can then hold only for its final output: the mixing must be undone exactly by the time that output is formed.

We therefore construct realizations of two-level Strassen that undo the mixing exactly, by carrying it out on bounded integer codes, whose sums do not round.
Where EmuGEMM uses this exactness to save multiplications through cancellation without rounding error \citep{ozaki2025ozaki,lu2026emugemm}, we use it to make a fast realization compute exactly what classical int8 matrix multiplication computes.
Each token row is quantized to integer codes on its own, the algorithm mixes and cancels these codes, and the scales are applied only afterwards.
Our certificate checks that the integer computation stays exact and that the scales can still be applied as in classical int8.
Every realization that passes computes, bit for bit, the classical int8 operator with the same codes and scales, rescaled and accumulated in the same order (Figure~\ref{fig:overview}C)\ifhmode\unskip\fi; since that operator is row-local, every certified realization inherits its prefix invariance.
All \NIclrActFourStrassenTwoGaugesN{} sign variants of two-level Strassen pass the certificate and compute the same output bits.

The certified realizations match classical int8 bit for bit while using \NIclrActFourStrassenTwoMultsFast{} block multiplications instead of \NIclrActFourStrassenTwoMultsClassical{}.
Making them run faster than classical int8 is a separate task: Section~\ref{sec:cost} reports our first timings and leaves faster kernels to future work.
Certification makes the choice between them a decision about cost alone: taking the saving no longer risks prefix invariance.

\section{Realizations and prefix invariance}\label{sec:contract}

\subsection{Fast algorithms and their sign variants}\label{sec:signvariants}

A fast matrix multiplication algorithm for $C=AB$ splits $A$ into an $m\times k$ grid of blocks $A_{il}$ and $B$ into a $k\times n$ grid of blocks $B_{lj}$, multiplies $R$ pairs of block sums and combines the results:
\begin{equation}\label{eq:bilinear}
M_r=\Bigl(\sum_{i,l}u_{r,il}\,A_{il}\Bigr)\Bigl(\sum_{l,j}v_{r,lj}\,B_{lj}\Bigr)\quad(r=1,\dots,R),\qquad C_{ij}=\sum_{r=1}^{R}w_{r,ij}\,M_r .
\end{equation}
The coefficients $(U,V,W)$ make every unwanted cross term cancel; the classical algorithm uses $R=mkn$ block multiplications, and two-level Strassen, with $m=k=n=4$, uses \NIclrActFourStrassenTwoMultsFast{} instead of \NIclrActFourStrassenTwoMultsClassical{}.
Different coefficients can compute the same product.
Let $D$, $E$ and $F$ be diagonal sign matrices whose entries $d_i$, $e_l$ and $f_j$ flip whole block rows or columns.
Since $AB=D\,(DAE)(EBF)\,F$, running the algorithm on $DAE$ and $EBF$ and flipping the result back computes $AB$ exactly \citep{malik2019randomization}.
Absorbing the signs gives the coefficients $d_ie_lu_{r,il}$, $e_lf_jv_{r,lj}$ and $d_if_jw_{r,ij}$, with the same number of block multiplications; these coefficient sets are the algorithm's sign variants.
Negating all of $D$, $E$ or $F$ leaves every weighted product $w_{r,ij}M_r$ unchanged, so we fix $d_1=e_1=f_1=1$; two-level Strassen then has \NIclrActOneQOneFourOrbitN{} sign variants.

Quantized and executed the same way, the sign variants are exactly equivalent realizations, yet in low precision they can still differ in accuracy.
Choosing among such realizations is an old problem, from the fixed-point digital filters of \citet{Mullis_1976} to fast matrix multiplication, where the sign variants are a small part of a larger set of equivalences \citep{degroote1978varieties2} that \citet{dumas-automated-2025} search for a smaller bound on the rounding error.
\citet{malik2019randomization} randomize exactly these sign changes and find that random signs can lower the error on synthetic matrices.

\subsection{Quantization and the reference operator}\label{sec:spec}

Sign changes can alter the error because they change the values a low-precision realization rounds, and how it rounds them is set by how it is quantized and executed; for integer codes, as in classical int8 and our construction, a quantization specification $c$ spells these rules out.
On the way in, $c$ fixes how the operands become integer codes: the code bounds $b_A$ and $b_B$ (the largest code magnitudes); the length $g$ of the groups along the inner dimension that share a scale (block scaling); the rows or columns each scale belongs to; the rule that sets it; and the rounding to codes.
On the way out, it fixes how the group results become the output: the precision and exact order of operations in which they are rescaled and summed, and the output format.
We treat $c$ as part of the operator a realization computes: changing any of these, even the order of the final additions, can change the output bits.

Our reference, which a certified realization must reproduce, is the classical int8 operator at $c$: it multiplies the codes of each group exactly and rescales and sums the group results as $c$ prescribes.
Because $c$ gives each row of $A$ and each column of $B$ its own scale per group, set from that group's entries, row $t$ of the output reads only row $t$ of $A$, its codes and its scales: the operator is row-local.

\subsection{Prefix invariance}\label{sec:prefix}

Row locality matters for a property that accuracy does not certify: prefix invariance.
It requires that for any two inputs that agree on a position's allowed prefix, the text of its own sequence up to that position, the outputs at that position be the same.
The causal mask keeps attention from breaking it, but \citet{kim2026mask} show that the mask alone does not guarantee it: other operators that combine positions can break it too.
A linear layer keeps it when its matrix multiplication is row-local, like the reference operator.
The fast FP8 schedules of Section~\ref{sec:intro} are not: they round the block sums of $A$, and because the rows of $A$ are token positions, a block sum over different $i$ mixes tokens, and in a packed batch it can mix requests.

Prefix invariance differs from deterministic inference, which batch-invariant kernels now provide for LLM serving \citep{he2025defeating,lmsys2025deterministic}: deterministic inference asks that a request's output not change with how it is executed, for example with its batch size, while prefix invariance asks which text an output may depend on at a fixed execution plan and input shape.
Batch-invariant kernels fix the order of each sum, and order-independent summation \citep{ahrens2020algorithms} makes the result independent of it; both act on how a fixed set of numbers is added, while a fast algorithm changes what is added, so neither, by itself, keeps later text or another request's text out of an output.

\section{Exactly equivalent realizations are not interchangeable}\label{sec:separation}

Two checks are natural for a fast low-precision realization, the accuracy of the quantized model and the stability criteria computed from an algorithm's coefficients, and neither certifies prefix invariance.
Realizations repaired to accuracy close to the bf16 model's still change answers when only text they must not see is replaced (Sections~\ref{sec:support} and~\ref{sec:reach}), and the criteria we analyze do not even separate sign variants whose errors differ widely once quantized (Section~\ref{sec:magnitude}).

\subsection{Accuracy repair does not restore prefix invariance}\label{sec:support}

We run two-level Strassen in every linear layer of Qwen2.5-14B-Instruct \citep{qwen25} except the output layer, rounding its block sums to FP8 with the scales that the deployed FP8 matrix multiplication kernel of DeepGEMM uses for its own operands \citep{deepgemm2025}.
The model then answers only \NIclrActTwoQOneFourObqaFillerAccPctLibrary\% of the first \NIclrActTwoQOneFourObqaItems{} OpenBookQA test items \citep{mihaylov2018can} correctly, against \NIclrActTwoQOneFourObqaPlausibleAccPctClean\% in bf16 (Table~\ref{tab:support}).
We build two fast realizations whose accuracy comes back close to the bf16 model's, each by changing both the algorithm and how its block sums are rounded.
Repaired~\#1 runs one level of Strassen, and repaired~\#2 runs two levels of the seven-multiplication algorithm that the measure of expected error $\Phi$ of \citet{xie2026fast} ranks most accurate.
Both rescale each token row by a power of two before rows are summed, the outside scaling of \citet{Dumitrescu_1998} as \citet{ballard2015improving} describe it, and round the weight block sums jointly to reduce the output error, adapting \citet{nagel2020down} (rules in Appendix~\ref{app:scoring}).
The three fast realizations, unrepaired and repaired, are our compositions of DeepGEMM's FP8 routines.
We compare them with two row-local controls: the bf16 model and the deployed FP8 kernel, which scales each token row on its own and forms no block sums.

Ordinary-looking accuracy does not show that the repaired realizations keep prefix invariance, so we test the property directly, following the audit design of \citet{kim2026mask}.
Likelihood scoring reads each token of a candidate answer at the position before it, in one forward pass, so an answer's later tokens are in the input when an earlier one is scored.
For each answer token we build a second input that keeps the token's allowed prefix and replaces the token itself and all later text.
The original token is still scored, from the position before it, in both inputs (Figure~\ref{fig:leak}A), and an answer's score under replacement sums these token scores.
The two inputs have the same shape and run under one execution plan (Appendix~\ref{app:support}), so a computation that keeps prefix invariance scores them identically.
The replacement is either filler, unrelated text shared by all answers of an item, or the bf16 model's greedy continuation of the allowed prefix, an ordinary future rather than an adverse one.
On all but \NIclrActTwoQOneFourObqaPlausibleWThreeRate\% of the answers, the continuation already differs from the real text at the answer's first token.
A changed answer is not run-to-run variation: on an identical input the unrepaired realization and repaired~\#2 repeat their logits exactly, and an independent second run reproduces every chosen answer of repaired~\#1 and of both controls.
\begin{figure}[t]
\centering
\includegraphics[width=\textwidth]{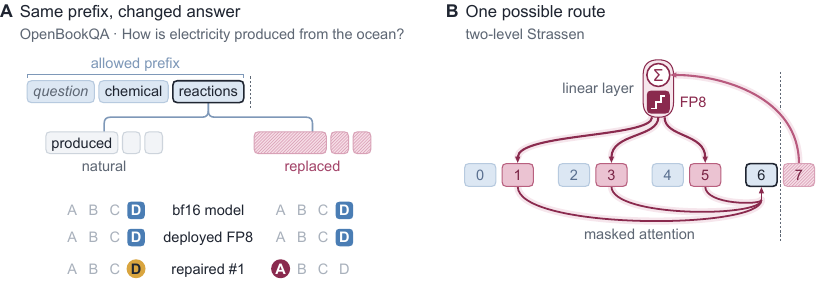}
\caption{\textbf{A}~One OpenBookQA item on Qwen2.5-14B-Instruct: the answer chosen with the natural text and with the bf16 model's greedy continuation after an illustrative cut; ``produced'' is scored at the outlined token. \textbf{B}~In two-level Strassen, block sums rounded to FP8 carry row 7 into rows 1, 3 and 5 (repaired~\#1 only into 3), and masked attention carries those rows on to position 6.}
\label{fig:leak}
\end{figure}

The repaired realizations reach \NIclrActTwoQOneFourObqaPlausibleAccPctSxc\% and \NIclrActTwoQOneFourObqaPlausibleAccPctOptTwo\% accuracy on OpenBookQA, against \NIclrActTwoQOneFourObqaPlausibleAccPctClean\% in bf16, yet change the answer chosen by likelihood on \NIclrActTwoQOneFourObqaPlausibleDnatSxc\% [\NIclrActTwoQOneFourObqaPlausibleDnatSxcLo, \NIclrActTwoQOneFourObqaPlausibleDnatSxcHi] and \NIclrActTwoQOneFourObqaPlausibleDnatOptTwo\% [\NIclrActTwoQOneFourObqaPlausibleDnatOptTwoLo, \NIclrActTwoQOneFourObqaPlausibleDnatOptTwoHi] of the items when the text after the allowed prefix is replaced by the greedy continuation (Table~\ref{tab:support})\ifhmode\unskip\fi.
The row-local controls change none, under either replacement.
Under the same continuation, target-token log probabilities change at \NIclrTokObqaGreedyRepairOnePct\% and \NIclrTokObqaGreedyRepairTwoPct\% of scored positions for repaired~\#1 and \#2, respectively (Table~\ref{tab:tokenchanges})\ifhmode\unskip\fi.
The deployed FP8 kernel rounds to the same format, so the contrast lies within one FP8 format: unlike the controls, the repaired realizations round sums that mix token rows.
Table~\ref{tab:support} shows the changes persisting under the filler, both on OpenBookQA and on ARC-Easy \citep{clark2018arc}.

\begin{table}[t]
\caption{Accuracy (\%) on the original input, and answers changed (\%) when the text after the allowed prefix is replaced by the bf16 model's greedy continuation or by filler, with 95\% bootstrap intervals over items for the answers changed; Qwen2.5-14B-Instruct, the first \NIclrActTwoQOneFourObqaItems{} test items of each task.}
\label{tab:support}
\centering
\small
\setlength{\tabcolsep}{3.5pt}
\begin{tabular}{@{}l r r@{\hspace{2pt}}l r@{\hspace{2pt}}l @{\hspace{12pt}} r r@{\hspace{2pt}}l@{}}
\toprule
 & \multicolumn{5}{c}{OpenBookQA} & \multicolumn{3}{c}{ARC-Easy} \\
\cmidrule(lr){2-6}\cmidrule(l){7-9}
 & & \multicolumn{4}{c}{answers changed} & & \multicolumn{2}{c}{changed} \\
\cmidrule(lr){3-6}\cmidrule(l){8-9}
 & accuracy & \multicolumn{2}{c}{continuation} & \multicolumn{2}{c}{filler} & accuracy & \multicolumn{2}{c}{filler} \\
\midrule
bf16 model & \NIclrActTwoQOneFourObqaPlausibleAccPctClean & \NIclrActTwoQOneFourObqaPlausibleDnatClean & & \NIclrActTwoQOneFourObqaFillerDnatClean & & \NIclrActTwoQOneFourArcFillerAccPctClean & \NIclrActTwoQOneFourArcFillerDnatClean & \\
deployed FP8 kernel & \NIclrActTwoQOneFourObqaPlausibleAccPctCubic & \NIclrActTwoQOneFourObqaPlausibleDnatCubic & & \NIclrActTwoQOneFourObqaFillerDnatCubic & & \NIclrActTwoQOneFourArcFillerAccPctCubic & \NIclrActTwoQOneFourArcFillerDnatCubic & \\
\midrule
repaired \#1 & \NIclrActTwoQOneFourObqaPlausibleAccPctSxc & \NIclrActTwoQOneFourObqaPlausibleDnatSxc & {\scriptsize[\NIclrActTwoQOneFourObqaPlausibleDnatSxcLo,\,\NIclrActTwoQOneFourObqaPlausibleDnatSxcHi]} & \NIclrActTwoQOneFourObqaFillerDnatSxc & {\scriptsize[\NIclrActTwoQOneFourObqaFillerDnatSxcLo,\,\NIclrActTwoQOneFourObqaFillerDnatSxcHi]} & \NIclrActTwoQOneFourArcFillerAccPctSxc & \NIclrActTwoQOneFourArcFillerDnatSxc & {\scriptsize[\NIclrActTwoQOneFourArcFillerDnatSxcLo,\,\NIclrActTwoQOneFourArcFillerDnatSxcHi]} \\
repaired \#2 & \NIclrActTwoQOneFourObqaPlausibleAccPctOptTwo & \NIclrActTwoQOneFourObqaPlausibleDnatOptTwo & {\scriptsize[\NIclrActTwoQOneFourObqaPlausibleDnatOptTwoLo,\,\NIclrActTwoQOneFourObqaPlausibleDnatOptTwoHi]} & \NIclrActTwoQOneFourObqaFillerDnatOptTwo & {\scriptsize[\NIclrActTwoQOneFourObqaFillerDnatOptTwoLo,\,\NIclrActTwoQOneFourObqaFillerDnatOptTwoHi]} & \NIclrActTwoQOneFourArcFillerAccPctOptTwo & \NIclrActTwoQOneFourArcFillerDnatOptTwo & {\scriptsize[\NIclrActTwoQOneFourArcFillerDnatOptTwoLo,\,\NIclrActTwoQOneFourArcFillerDnatOptTwoHi]} \\
unrepaired & \NIclrActTwoQOneFourObqaFillerAccPctLibrary & \NIclrActTwoQOneFourObqaPlausibleDnatLibrary & {\scriptsize[\NIclrActTwoQOneFourObqaPlausibleDnatLibraryLo,\,\NIclrActTwoQOneFourObqaPlausibleDnatLibraryHi]} & \NIclrActTwoQOneFourObqaFillerDnatLibrary & {\scriptsize[\NIclrActTwoQOneFourObqaFillerDnatLibraryLo,\,\NIclrActTwoQOneFourObqaFillerDnatLibraryHi]} & \NIclrActTwoQOneFourArcFillerAccPctLibrary & \NIclrActTwoQOneFourArcFillerDnatLibrary & {\scriptsize[\NIclrActTwoQOneFourArcFillerDnatLibraryLo,\,\NIclrActTwoQOneFourArcFillerDnatLibraryHi]} \\
\bottomrule
\end{tabular}
\end{table}

For repaired~\#1 on OpenBookQA, the changes under the greedy continuation barely move accuracy: of its \NIclrActTwoExemplarsNMoved{} changed answers, \NIclrActTwoExemplarsNeitherIsGold{} are wrong before and after the change, \NIclrActTwoExemplarsNaturalIsGold{} go from correct to wrong and \NIclrActTwoExemplarsBlindIsGold{} from wrong to correct.
Figure~\ref{fig:leak}A shows one of them, a switch from one wrong answer to another, while both controls keep theirs.
Accuracy is therefore blind to most of these changes, and so is a count of changes in correctness, such as the flips that \citet{dutta2024accuracy} use to compare a compressed model with its original.

\subsection{How the violation can arise and how far it extends}\label{sec:reach}

Later text can reach a scored position through the linear layers, where the causal mask has no effect.
Two-level Strassen splits the rows of a linear layer's input, which are token positions, into four contiguous blocks after padding with zeros, and its block sums add rows at the same offset in different blocks (Figure~\ref{fig:leak}B).
In exact arithmetic the other rows cancel from every output row; after FP8 rounding they need not \citep{malik2019randomization}, so an output row can depend on the later rows at its offset, as a test of a single multiplication confirms (Appendix~\ref{app:rowmap}).
Repaired~\#2 adds the same rows and repaired~\#1, with one level, adds pairs of them; rescaling and joint rounding change the scale and rounding of these sums, not which rows they add.
Later text can thus reach a scored token directly, when a later row shares its offset, or indirectly: it can change earlier rows, and correctly masked attention can carry those rows forward to the scored token.

The dependence is not confined to multiple-choice scoring or to one model: on five models of four families \citep{qwen25,qwen3,llama3,olmo2024olmo}, replacing all but the first quarter of each 1,024-token window changes \NIclrActTwoCFourSixThreeUnrepairedFlipPctMin--\NIclrActTwoCFourSixThreeUnrepairedFlipPctMax\% of the next-token predictions within that quarter under the unrepaired realization, \NIclrActTwoCFourSixThreeRepairedFlipPctMin--\NIclrActTwoCFourSixThreeRepairedFlipPctMax\% under the repaired ones, and none under the controls (Appendix~\ref{app:support}).

The same dependence crosses requests: we pack two requests into one forward pass of fixed shape, with attention separated between them and no key--value cache, and replace the second request's text with other text of the same length.
Over \NIclrActTwoCFourSixFourPairs{} such pairs on three models, the first request's next-token distribution changes in every pair under the unrepaired realization and under repaired~\#2, and never under the controls.
Under repaired~\#2 its greedy next token changes in \NIclrActTwoCFourSixFourRepairedGreedyPctMin--\NIclrActTwoCFourSixFourRepairedGreedyPctMax\% of the pairs, and on Qwen2.5-3B the first request's key and value projections already differ at the first layer, before any attention, but not under the controls (Appendix~\ref{app:support}).

\subsection{What the stability criteria cannot see}\label{sec:magnitude}

The second check comes from the numerical analysis of fast algorithms, which studies their rounding error, where violations of prefix invariance can arise, and compares algorithms by stability criteria computed from their coefficients.
The coefficients alone, however, do not decide prefix invariance: with the same coefficients, two-level Strassen keeps it in exact arithmetic and on bounded integer codes, whose sums do not round (Section~\ref{sec:construction}), and breaks it under the FP8 schedule of Section~\ref{sec:support}.

The criteria we analyze are $\Phi$, the number of nonzero coefficients, the largest coefficient magnitude, and the parameters $Q$ and $E$ of the bound on rounding error of \citet{ballard2015improving} (defined in Appendix~\ref{app:criteria}), and they do not even separate the sign variants of one algorithm by their error.
They read the coefficients only through their absolute values or their nonzero pattern, which a sign change leaves unchanged, so all \NIclrActOneQOneFourOrbitN{} sign variants of two-level Strassen get one value; \citet[Proposition~10]{malik2019randomization} and \citet[Lemma~26]{dumas-automated-2025} prove this for the criteria in their own error bounds: the bounds stay valid, but equal bounds do not promise equal realized error.

Measuring the realized error on random matrices, as the checks that accompany fast kernels and their error analyses do \citep{Jangda_2026,dai2022numerical,lu2026emugemm}, would not separate the sign variants either when the entries are independent.
These checks draw entries symmetric about zero, and for independent entries a sign variant computes, up to exact sign flips of the output, what the original algorithm computes on $DAE$ and $EBF$, which are distributed exactly as $A$ and $B$ (Section~\ref{sec:signvariants}), so all variants' error norms have the same distribution.
With entries uniform on $[0,1]$ the symmetry breaks and the variants' errors differ (Appendix~\ref{app:criteria}): how they compare depends on the operands, so we run every variant inside a model, on its own activations.

Under the FP8 schedule of Section~\ref{sec:support}, in every linear layer of Qwen2.5-14B-Instruct except the output layer, we score each of the \NIclrActOneQOneFourOrbitN{} sign variants by its teacher-forced perplexity on WikiText-2 \citep{merity-pointer-2016}, divided by the perplexity of the model in bf16.
This ratio runs from \NIclrActOneQOneFourOrbitMin{} to \NIclrActOneQOneFourOrbitMax{}, a \NIclrActOneQOneFourOrbitSpread{}$\times$ spread (Figure~\ref{fig:overview}B), and the variant with Strassen's original signs, the unrepaired realization of Section~\ref{sec:support}, sits at \NIclrActOneQOneFourOrbitIdentity{}$\times$.
A single sign, which none of the criteria reads, can make nearly all of this difference: two variants that differ only in one of the signs $d_i$, $e_l$, $f_j$ of Section~\ref{sec:signvariants} have ratios \NIclrActOneQOneFourEdgeMaxFactor{} times apart.
Appendix~\ref{app:magnitude} reports the spread on seven models, with three controls.

Teacher forcing reads every position in one forward pass, with the text after a position, including the token it predicts, in the input; under a realization that is not row-local, a position's score can therefore depend on that text.
Replacing that text with filler chosen independently of the predicted token, for four variants across the range (the smallest, the median, Strassen's original signs and the largest), leaves their spread at \NIclrActOneCFourEightEightRatio{} times its teacher-forced value on the same scored positions (95\% confidence interval [\NIclrActOneCFourEightEightCiLo, \NIclrActOneCFourEightEightCiHi], bootstrap over text windows; Appendix~\ref{app:order}): the spread survives the replacement.

Exactly equivalent realizations can thus differ once quantized, in what their outputs depend on and in how large their error is; accuracy misses the first and the criteria we analyze miss both.

\section{Certified realizations on bounded integer codes}\label{sec:construction}

Prefix invariance constrains a realization's outputs, not each product it forms on the way.

\subsection{Prefix-safe products force the classical count}\label{sec:obstruction}

Whether a product keeps later rows out of the outputs it feeds can be read off the coefficients of~(\ref{eq:bilinear}), because the rows of $A$ are token positions, split into $m$ contiguous blocks (Section~\ref{sec:reach}).
A product $M_r$ reads block row $i$ of $A$ if $u_{r,il}\neq0$ for some $l$, and feeds block row $i$ of $C$ if $w_{r,ij}\neq0$ for some $j$.
We call an algorithm prefix-safe product by product if no product reads a block row that comes after a block row it feeds.
Such an algorithm keeps later rows out of earlier outputs even when each row of its block sums is rounded with its own scale, but it cannot keep the saving.

\begin{theorem}\label{thm:prefixsafe}
An algorithm of the form~(\ref{eq:bilinear}) that is prefix-safe product by product uses $R\ge mkn$ block multiplications, and the classical algorithm attains the bound (proof in Appendix~\ref{app:proofs}).
\end{theorem}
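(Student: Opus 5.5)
The plan is to read the algorithm~(\ref{eq:bilinear}) as a rank decomposition of the matrix multiplication tensor, and to pull out of it, for each block row $i$, a separate decomposition of a smaller product. For each product $M_r$, let $\mathrm{rd}(r)$ be the set of block rows $i$ with $u_{r,il}\neq0$ for some $l$, and $\mathrm{fd}(r)$ the set of block rows with $w_{r,ij}\neq0$ for some $j$. Prefix-safety product by product says that $i\le i'$ whenever $i\in\mathrm{rd}(r)$ and $i'\in\mathrm{fd}(r)$. Define
\[
P_i=\{\,r:\ i\in\mathrm{rd}(r)\cap\mathrm{fd}(r)\,\}.
\]

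The first step is to show that the sets $P_1,\dots,P_m$ are pairwise disjoint. Suppose $r\in P_i\cap P_j$ with $i<j$. Then $M_r$ reads block row $j$ and feeds block row $i$, with $j$ after $i$, which violates prefix-safety. Hence $R\ge\sum_i|P_i|$, and it suffices to prove $|P_i|\ge kn$ for every $i$.

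The second step is a restriction argument. Fix $i$, set every block of $A$ outside block row $i$ to zero, and keep only block row $i$ of the output. The identity~(\ref{eq:bilinear}) holds for all inputs, so it still holds after this restriction, which gives
\[
C_{ij}=A_{i\cdot}B_{\cdot j}=\sum_{r}w_{r,ij}\Bigl(\sum_l u_{r,il}A_{il}\Bigr)\Bigl(\sum_{l,j'}v_{r,lj'}B_{lj'}\Bigr)\qquad(j=1,\dots,n).
\]
A term survives only if $M_r$ reads row $i$ (the left factor is nonzero) and feeds row $i$ (some $w_{r,ij}\neq0$), that is, only if $r\in P_i$. So the products in $P_i$ form a bilinear algorithm, with block multiplications, for the $1\times k$ by $k\times n$ block product $A_{i\cdot}B$.

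The third step, which I expect to be the only one needing care, is the lower bound $kn$ for this smaller problem. I would use a flattening argument in block form. Each product $M_r$, $r\in P_i$, is a fixed linear combination $L_r(B)$ of the $B_{lj}$ multiplied on the left by a linear combination of the $A_{il}$. Suppose $|P_i|<kn$. Then there are scalars $\beta_{lj}$, not all zero, with $\sum_{l,j}\beta_{lj}v_{r,lj}=0$ for all $r\in P_i$. Take $B_{lj}=\beta_{lj}X$ for a nonzero block $X$. Every $L_r(B)$ vanishes, so the algorithm outputs $C_{ij}=0$ for every $j$. Now pick $(l_0,j_0)$ with $\beta_{l_0j_0}\neq0$ and choose $A_{il_0}$ with $A_{il_0}X\neq0$ and $A_{il}=0$ for $l\ne l_0$. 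The true product is $C_{ij_0}=\beta_{l_0j_0}A_{il_0}X\neq0$, a contradiction. To keep the argument clean I would first reduce to scalar blocks, i.e.\ $1\times1$ blocks, which is a valid instance of~(\ref{eq:bilinear}); this also avoids any issue with noncommuting block products. Hence $|P_i|\ge kn$, and summing over $i$ gives $R\ge mkn$.

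Finally, the classical algorithm uses one product $A_{il}B_{lj}$ for each triple $(i,l,j)$. Each such product reads and feeds only block row $i$, so it is prefix-safe and attains $R=mkn$.
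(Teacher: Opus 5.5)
Your proof is correct and follows the paper's argument. You use the same per-row sets of products that both read and feed row $i$, the same disjointness from prefix safety, and the same restriction to inputs supported on row $i$ to get at least $kn$ products per row. The only difference is presentation: you get $|P_i|\ge kn$ by finding a nonzero $\beta$ in the common kernel of the $v_r$, while the paper shows that the $kn$ coordinate functions $B\mapsto B_{lj}$ lie in the span of the $g_r$ with $r\in\mathcal{R}_i$, which is the dual form of the same dimension count.
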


Two-level Strassen, with \NIclrActFourStrassenTwoMultsFast{} products instead of \NIclrActFourStrassenTwoMultsClassical{}, therefore has products that carry a later block row into an earlier output, and the contributions of those rows must vanish from every earlier output by the time it is formed, as they do in exact arithmetic.

\subsection{Computing classical int8 with a fast algorithm}\label{sec:certificate}

\begin{figure}[t]
\centering
\includegraphics[width=\textwidth]{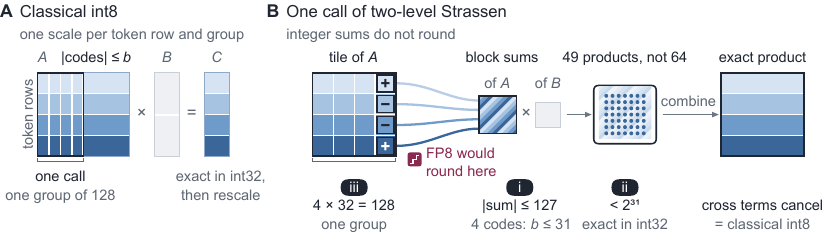}
\caption{\textbf{A}~Classical int8 on one group of 128. \textbf{B}~One call of two-level Strassen on the same codes; (i)--(iii) mark where the conditions of Proposition~\ref{prop:certificate} bind.}
\label{fig:method}
\end{figure}

Our construction builds on classical int8 matrix multiplication, which already multiplies integer codes without rounding.
At a quantization specification $c$ (Section~\ref{sec:spec}), it gives each row of $A$ and each column of $B$ its own scale for every group of $g$ consecutive indices of the inner dimension.
It multiplies the codes of each group exactly, and only then applies the scales and adds the group results in the order $c$ fixes (Figure~\ref{fig:method}).
Our realizations keep all of this except the exact product of the codes, which they compute with the fast algorithm on one submatrix, or tile, of $A$ and one of $B$ at a time; we call each such computation a call.
Their output bits are those of classical int8 if every call returns its integer product exactly and if the scales can still be applied after it.
Rows can keep their own scales, since an exact product builds each output row from its own row's codes alone.
The inner dimension is another matter: every fast algorithm mixes its indices too, since one whose products each read a single block column of $A$ needs $mkn$ of them (Theorem~\ref{thm:onecolumn}), and independent scales along it cannot in general be applied after products that do not depend on them (Appendix~\ref{app:proofs}).

Our certificate consists of three conditions that secure both requirements, and we call a realization that meets them certified.
They involve the largest coefficient 1-norms of the block sums, $L_A=\max_r\sum_{i,l}|u_{r,il}|$ and $L_B=\max_r\sum_{l,j}|v_{r,lj}|$, the largest total weight $L_W=\max_{i,j}\sum_r|w_{r,ij}|$ with which an output block combines products, and the inner length $h$ of a block.

\begin{proposition}\label{prop:certificate}
Let a realization quantize, rescale and accumulate as the classical int8 operator at a quantization specification $c$ does, and compute the integer product of the codes tile by tile, each call with an algorithm of the form~(\ref{eq:bilinear}) with integer coefficients.
Suppose that
(i)~$L_Ab_A\le127$ and $L_Bb_B\le127$;
(ii)~$\max\{L_Wh(L_Ab_A)(L_Bb_B),\,gb_Ab_B\}<2^{31}$ for int32 accumulation, or $<2^{24}$ for exact integers in fp32; and
(iii)~each call lies within one group along the inner dimension.
Then on every input the realization computes the output of the classical int8 operator at $c$, bit for bit.
\end{proposition}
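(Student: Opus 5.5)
The plan is to reduce the claim to one fact: every call returns, exactly, the integer product of the codes it receives, and that product is exactly what classical int8 forms for the same group. Once this holds, the realization and the reference receive identical integer group results. They then apply the same scales in the same precision and order, because the realization ``rescales and accumulates as the classical int8 operator at $c$ does''. So their outputs agree bit for bit. The argument therefore splits into two parts: exactness of each call, checked as a chain of range bounds tied to conditions (i) and (ii), and the applicability of scales, which is where condition (iii) is used.

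For exactness I would follow one call of~(\ref{eq:bilinear}) stage by stage, with integer coefficients throughout.
\begin{itemize}
\item \emph{Block sums.} Each code of $A$ has magnitude at most $b_A$. An entry of a block sum $\sum_{i,l}u_{r,il}A_{il}$ is an integer of magnitude at most $\sum|u_{r,il}|\,b_A\le L_Ab_A\le127$ by (i), so it is again a valid int8 code. The same holds for $B$ with $L_Bb_B$. The block sums are therefore representable without rounding and can be fed to an int8 multiply.
\item \emph{Products.} Each entry of $M_r$ is a sum of $h$ products of such entries. Its magnitude is at most $h(L_Ab_A)(L_Bb_B)$, and every partial sum obeys the same bound.
\item \emph{Combination.} The output entry $\sum_r w_{r,ij}M_r$, and every partial sum formed in any order, has magnitude at most $L_Wh(L_Ab_A)(L_Bb_B)$.
\end{itemize}
By (ii) all these integers lie below $2^{31}$, so int32 neither overflows nor wraps. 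In the fp32 variant they lie below $2^{24}$, where every integer is exactly representable and every sum of two such integers is computed exactly. In both cases each call returns the exact integer value of~(\ref{eq:bilinear}). Because the coefficients form a correct algorithm, that value equals the exact product of the code tiles.

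The second term in (ii), $gb_Ab_B<2^{31}$ (or $2^{24}$), bounds the reference computation. The classical int8 group result is a sum of $g$ code products of magnitude at most $b_Ab_B$, so it too is computed exactly. This makes the reference's integer group result the true integer dot product.

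The remaining step, and the one I expect to need the most care, is to show that the scales can be applied after the call, exactly as the reference applies them. By (iii) each call's inner index range lies inside a single group. So the $A$ and $B$ codes it reads carry one scale per row of $A$ and one per column of $B$, namely those of that group. The mixing across row blocks of $A$ and column blocks of $B$ does not touch scales. The exact integer product is row- and column-wise identical to the classical one, so entry $(t,s)$ of a call's result is the partial dot product of row $t$'s and column $s$'s codes over the call's inner range.

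If a group spans several calls along the inner dimension, their exact integer results are added in an exact accumulator; the bound on $g b_Ab_B$ covers this case. Summing them then reproduces the reference's group result as the same integer. Rescaling with the group's row and column scales and accumulating across groups in $c$'s order is then literally the same floating-point computation as in the reference, which gives bitwise equality on every input.

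I would state explicitly that accumulating a group's calls stays exact; if the realization instead rescales per call, I would need $h$ to equal $g$. I would check this against the definition of $c$ before finalising.
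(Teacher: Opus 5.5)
Your structure matches the paper's proof. You reduce the claim to exactness of each group's integer product and bound block sums, block products and weighted combinations by sums of absolute values via (i) and (ii). You then use (iii) with the $gb_Ab_B$ term to keep the running sum over a group's calls exact. Two small points: that term is needed for the realization's running sum, not for the reference, whose group product is exact by definition. And placing the block products under (ii) uses $L_W\ge1$, which integer weights guarantee.

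There is one genuine gap, in the fp32 branch of (ii), at the step where you call the rescaling and accumulation ``literally the same floating-point computation as in the reference.'' Exact values below $2^{24}$ do not give identical bits, because fp32 has two zeros. A product such as $(-1)\cdot0$ yields $-0$, a sum of $-0$ terms stays $-0$, and multiplication by the positive scales preserves it. So an emulated group result can be $-0$ where the reference's integer $0$ converts to $+0$. The two floating-point computations then start from different operands, and bitwise agreement of the outputs needs an argument.

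The paper supplies it. The group result is converted as classical int8 converts the integer, so a zero enters the rescaling as $+0$. And for the specification actually run, the sum over groups starts from $+0$; under round-to-nearest $(+0)+(-0)=+0$ and $x+(\pm0)=x$, so the accumulator never holds $-0$ and the output bits coincide. Without this step, or an explicit normalization of zeros, your proof establishes the bitwise claim only for int32 accumulation.

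Your closing hedge can be dropped. The hypothesis fixes rescaling once per group, after the group's integer product is complete, so per-call rescaling never arises. Had it arisen, the condition would be that a call spans a whole group, i.e.\ $kh=g$ for $k$ block columns of inner length $h$ (four blocks of 32 for $g=128$ here), not $h=g$.
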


Conditions~(i) and~(ii) secure the first requirement: every block sum of codes stays within int8, so the products run on the same int8 multipliers as classical int8, and every product and partial sum, the running sum over a group's calls included, stays within the accumulator.
Condition~(iii) secures the second, since a call that stays within one group contributes only to that group's integer product, which is then rescaled as in classical int8.
The conditions involve only the coefficients, the tiling and $c$, so they are checked once, before any input is seen.
Every certified realization is therefore row-local, like classical int8, and keeps prefix invariance on every input (Section~\ref{sec:prefix}).
Two-level Strassen has $L_A=\NIclrActFourStrassenTwoHeadroomLA$ and $L_B=\NIclrActFourStrassenTwoHeadroomLB$, so condition~(i) admits codes up to $b_A=\NIclrActFourStrassenTwoHeadroomBA$ and $b_B=\NIclrActFourStrassenTwoHeadroomBB$.
With $L_W=\NIclrActFourStrassenTwoHeadroomLW$, int32 accumulation is exact for blocks of inner length up to \NIclrActFourStrassenTwoExactMaxLeafKIntThreeTwo, and with our groups of $g=128$ we use blocks of inner length 32, so the four block columns of a call span one group.
Two-level Strassen is thus certified at code bound \NIclrActFourStrassenTwoHeadroomBA, with \NIclrActFourStrassenTwoMultsFast{} block multiplications, \NIclrActFourStrassenTwoMultsRatio{} of the classical count.

Our integer product equals the classical one bit for bit on \NIclrActFourStrassenTwoExactTrials{} random tiles of codes; on a separately drawn tile the \NIclrActFourStrassenTwoGaugesN{} sign variants give \NIclrActFourStrassenTwoGaugesIntDistinct{} distinct integer product, where rounding the block sums to FP8 leaves \NIclrActFourStrassenTwoGaugesEFourmThreeExact{} of them exact and a \NIclrActFourStrassenTwoGaugesEFourmThreeLotteryRatio{}-fold spread in the largest error, and with the scales applied as well, under the full quantization specification on inputs of 32 token rows in four blocks, a subset of \NIclrActFourStrassenTwoScheduleGauges{} variants again gives \NIclrActFourStrassenTwoScheduleIntDistinct{} distinct output.

We then test prefix invariance directly, on separately drawn Gaussian inputs of that shape: changing one token row at a time and comparing every earlier output row moves \NIclrActFourStrassenTwoPrefixMovedInt{} of the \NIclrActFourStrassenTwoPrefixPairs{} pairs of an earlier and a later row\ifhmode\unskip\fi.
The same coefficients move \NIclrActFourStrassenTwoPrefixMovedFpEight{} pairs under an FP8 schedule that rounds the block sums before multiplying, exactly those whose rows share an offset in their blocks (Figure~\ref{fig:leak}B).
A control that breaks condition~(iii), with scales on groups of 32 indices while a call spans 128, rounds each mixed block sum again with one scale, set by its largest magnitude, which any of its rows can change, and all \NIclrActFourStrassenTwoPrefixMovedFinegroupControl{} pairs move.

Inside a model, we run two-level Strassen at code bound \NIclrActFourStrassenTwoHeadroomBA{} in every linear layer except the output layer of four models, Qwen2.5-14B-Instruct, Llama-3.1-8B-Instruct, OLMo-2-7B-Instruct and Qwen3-8B, emulating its int8 arithmetic exactly in fp32, where every intermediate stays below $2^{24}$.
Each time a replaced layer runs, its output matches classical int8's bit for bit (Appendix~\ref{app:construction}), so the remaining model measurements run that classical int8 operator, which gives the same outputs.
On the test that exposed the repaired FP8 realizations, replacing the text after the allowed prefix with the greedy continuation (Section~\ref{sec:support}), the operator changes none of the \NIclrActTwoQOneFourObqaItems{} OpenBookQA answers on any of the four models, like the row-local controls.

\subsection{Full-range codes with an overflow correction}\label{sec:correction}

Certification on int8 multipliers has a price, and condition~(i) sets it: a realization certified by condition~(i) can use codes only up to $\lfloor127/L_A\rfloor$ in $A$ and $\lfloor127/L_B\rfloor$ in $B$, \NIclrActFourStrassenTwoHeadroomBA{} for two-level Strassen (Appendix~\ref{app:construction}).

The overflow behind this price can be corrected instead of avoided.
Each block sum of codes $\hat A$ splits exactly into $\hat A=A_0+256R_A$, with $A_0$ the remainder in the int8 range and $R_A$ nonzero only where $\hat A$ overflows, and likewise $\hat B=B_0+256R_B$, so that
\begin{equation}\label{eq:correction}
\hat A\hat B=A_0B_0+256\,(R_AB_0+A_0R_B)+256^2R_AR_B .
\end{equation}
Two-level Strassen then forms its products on $A_0$ and $B_0$ and adds the correction terms of~(\ref{eq:correction}), which take further products, counted in Section~\ref{sec:cost}.
With this correction it computes classical int8 at code bound 127 bit for bit at the same groups of 128, under a certificate of its own (Appendix~\ref{app:construction}).

\subsection{Model quality and the choice of realization}\label{sec:quality}

\begin{figure}[t]
\centering
\includegraphics[width=\textwidth]{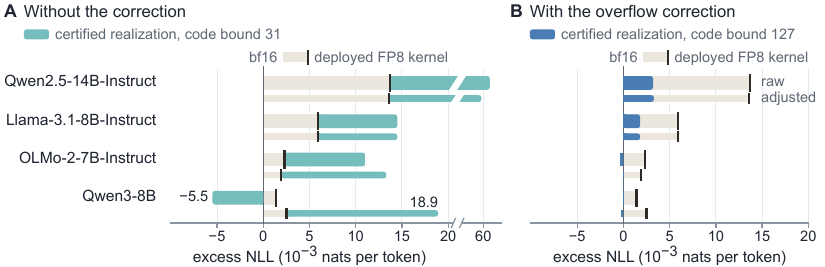}
\caption{Excess NLL per token over bf16 on eight 2048-token chunks of WikiText-2, at code bound \NIclrActFourStrassenTwoHeadroomBA{} (\textbf{A}) and at 127 with the overflow correction (\textbf{B}); thick bars raw, thin bars with temperatures fitted on the other seven chunks, bands ending at the deployed FP8 kernel's value. Certified values are measured on the classical int8 operators the realizations compute (Table~\ref{tab:quality}).}
\label{fig:quality}
\end{figure}

Bitwise identity fixes which operator a certified realization computes, not how close that operator comes to the bf16 model.
We measure that closeness for the operators the realizations compute, classical int8 at code bound \NIclrActFourStrassenTwoHeadroomBA{} and, with the correction, at 127, by their excess negative log-likelihood (NLL) per token over the bf16 model on eight chunks of WikiText-2, against the deployed FP8 kernel (Figure~\ref{fig:quality}).
Since a change of temperature alone can move such a comparison \citep{guo2017calibration}, we report the excess both raw, at temperature 1, and adjusted, with every operator and the bf16 model at a temperature fitted without the scored chunk.
The raw excess, which scores the models as they are run, is the primary reading.
At code bound \NIclrActFourStrassenTwoHeadroomBA{} the excess is larger than the kernel's on three of the four models raw, all but Qwen3-8B, and on all four adjusted.
With the overflow correction, at code bound 127, the excess is at or below the kernel's on all four models, raw and adjusted.
Table~\ref{tab:obqa-quality} also reports ordinary OpenBookQA accuracy for these operators.

\section{Cost and future work}\label{sec:cost}

A certified realization and classical int8 differ in two costs, the multiplications they perform and the time they take.
In multiplications, two-level Strassen at code bound \NIclrActFourStrassenTwoHeadroomBA{} performs \NIclrActFourStrassenTwoMultsRatio{} of classical int8's multiplications.
Raising the code bound to 127 brings the excess NLL to or below the deployed FP8 kernel's on all four models (Section~\ref{sec:quality}), but the correction of Section~\ref{sec:correction} then adds products wherever a block sum overflows.
If the correction multiplies only the entries that overflow, the corrected computation performs \NIclrCostCorrOneTwoSevenSparseMin--\NIclrCostCorrOneTwoSevenSparseMax{} of classical int8's multiplications on sampled layer slices from three models (Appendix~\ref{app:cost}); if it multiplies every row of $R_A$ and column of $R_B$ that holds an overflow, it performs \NIclrCostCorrOneTwoSevenGatheredMin--\NIclrCostCorrOneTwoSevenGatheredMax{} times as many.
Whether the full code range keeps the saving thus depends on how the correction is implemented.

In time, the kernels we have built for the certified realizations without the correction are slower than classical int8 at the shape we timed, two $4096\times4096$ matrices on an NVIDIA H20 graphics processing unit (GPU).
With our Triton kernels \citep{tillet2019triton}, one level of Strassen takes \NIclrCostStrassenOneQGOneTwoEight--\NIclrCostStrassenOneQGTwoFiveSix{} times as long as classical int8 across three group sizes, and two levels take \NIclrCostTwoLevelVsClassical{} times as long; part of this time goes to splitting a call into short block products, which slows even the classical algorithm (Appendix~\ref{app:cost}).
It remains open whether, and how far, the saving in multiplications can be turned into speed.
Any optimization that preserves the quantization specification and remains certified yields the same output bits.
We leave faster kernels, including one for the correction, and their timing across shapes and GPUs to future work.

\section{Conclusion}\label{sec:conclusion}

A fast matrix multiplication inside a language model has to do more than stay accurate: it has to keep prefix invariance, which likelihood-based evaluation and LLM serving rely on.
The two can come apart.
A low-precision fast realization repaired to accuracy close to the bf16 model's can still let its outputs depend on text they must not see, so prefix invariance has to be certified directly.
Such a certificate need not cost the saving in multiplications.
Although a fast algorithm cannot save multiplications without products that mix the rows of different tokens, prefix invariance constrains only the output those products produce.
A realization whose output is exactly that of a row-local operator keeps prefix invariance, whatever its products mix.
Two-level Strassen, which uses \NIclrActFourStrassenTwoMultsFast{} block multiplications instead of \NIclrActFourStrassenTwoMultsClassical{}, admits realizations of this kind: the ones we certify on bounded integer codes compute the row-local classical int8 operator bit for bit at the same quantization specification.
Without certification, exactly equivalent realizations can compute different operators once quantized, so choosing one can change what the model computes.
With it, which operator to run is a question of the quality it gives the model, and which certified realization computes it is a pure cost decision.

\bibliography{references}
\bibliographystyle{plainnat}

\appendix
\section{Proofs for Section 4}\label{app:proofs}

\subsection{Prefix-safe products force the classical count}\label{app:prefixsafe}

Within one sequence, the most direct way to guarantee prefix invariance is to require every product to keep later rows out of the outputs it feeds, and Theorem~\ref{thm:prefixsafe} states what that costs: an algorithm of the form~(\ref{eq:bilinear}) that is prefix-safe product by product, so that no product reads a block row of $A$ that comes after a block row of $C$ it feeds, uses $R\ge mkn$ block multiplications, as many as the classical algorithm.
The proof counts, for each block row, the products that both read it and feed it, and shows that prefix safety forbids two block rows to share such a product.

The count depends only on the coefficients, which also give an algorithm for single-entry blocks.
Setting every block $A_{il}=x_{il}X$ and $B_{lj}=y_{lj}Y$, for fixed matrices $X$ and $Y$ with $XY\neq0$, makes every term of~(\ref{eq:bilinear}) a multiple of $XY$, and comparing the multiples shows that the coefficients multiply the $m\times k$ matrix $(x_{il})$ by the $k\times n$ matrix $(y_{lj})$ with $R$ multiplications of numbers.
We may therefore take $A\in\mathbb{R}^{m\times k}$ and $B\in\mathbb{R}^{k\times n}$, so that block rows and block columns become rows and columns, and product $r$ is the number $M_r=f_r(A)\,g_r(B)$ with $f_r(A)=\sum_{i,l}u_{r,il}A_{il}$ and $g_r(B)=\sum_{l,j}v_{r,lj}B_{lj}$.
We write $E_{il}$ for the matrix, of the size the context requires, whose only nonzero entry is a~1 in position $(i,l)$, and $I$ for the identity matrix.

\begin{proof}[Proof of Theorem~\ref{thm:prefixsafe}]
For each row $i$, let $\mathcal{R}_i$ be the set of products that read row $i$ and feed row $i$.
Every algorithm of the form~(\ref{eq:bilinear}), prefix-safe or not, has at least $kn$ products in each $\mathcal{R}_i$.
To see this, take $A=E_{il}$: then $f_r(A)=u_{r,il}$, and row $i$ of $AB$ is row $l$ of $B$, so that
\[
B_{lj}=(AB)_{ij}=\sum_{r=1}^{R}u_{r,il}\,w_{r,ij}\,g_r(B)\qquad\text{for every }B .
\]
A term of this sum can be nonzero only if product $r$ reads row $i$ and feeds it, that is, only if $r\in\mathcal{R}_i$.
Letting $l$ and $j$ vary, each of the $kn$ coordinate functions $B\mapsto B_{lj}$ is therefore a linear combination of the functions $g_r$ with $r\in\mathcal{R}_i$; since the coordinate functions are linearly independent, $\mathcal{R}_i$ has at least $kn$ members.
Prefix safety makes the sets disjoint: a product in both $\mathcal{R}_i$ and $\mathcal{R}_{i'}$, with $i<i'$, feeds row $i$ and reads the later row $i'$.
Hence $R\ge|\mathcal{R}_1|+\dots+|\mathcal{R}_m|\ge mkn$.
The classical algorithm attains the bound, since its product $A_{il}B_{lj}$ reads row $i$ and feeds only row $i$.
\end{proof}

Strassen's algorithm, with $m=k=n=2$, shows the sharing that the bound rules out.
Its seven products, with $C_{11}=M_1+M_4-M_5+M_7$, $C_{12}=M_3+M_5$, $C_{21}=M_2+M_4$ and $C_{22}=M_1-M_2+M_3+M_6$, read and feed these rows:
\begin{center}
\small
\begin{tabular}{@{}l l c c@{}}
\toprule
 & product & reads rows of $A$ & feeds rows of $C$ \\
\midrule
$M_1$ & $(A_{11}+A_{22})(B_{11}+B_{22})$ & 1, 2 & 1, 2 \\
$M_2$ & $(A_{21}+A_{22})\,B_{11}$ & 2 & 2 \\
$M_3$ & $A_{11}\,(B_{12}-B_{22})$ & 1 & 1, 2 \\
$M_4$ & $A_{22}\,(B_{21}-B_{11})$ & 2 & 1, 2 \\
$M_5$ & $(A_{11}+A_{12})\,B_{22}$ & 1 & 1 \\
$M_6$ & $(A_{21}-A_{11})(B_{11}+B_{12})$ & 1, 2 & 2 \\
$M_7$ & $(A_{12}-A_{22})(B_{21}+B_{22})$ & 1, 2 & 1 \\
\bottomrule
\end{tabular}
\end{center}
Four products read and feed row~1, $M_1$, $M_3$, $M_5$ and $M_7$, and four read and feed row~2, $M_1$, $M_2$, $M_4$ and $M_6$, as the proof requires.
The two sets share one product, $M_1$, which is how seven products meet a count of eight.
A product shared by two rows always breaks prefix safety, since it feeds the earlier row and reads the later one: $M_1$ feeds row~1 and reads row~2.

\subsection{The inner dimension and its scales}\label{app:inner}

Theorem~\ref{thm:prefixsafe} forces a fast algorithm's products to mix rows.
They must also mix indices of the inner dimension, where classical int8 applies the scales of its groups.
A product reads block column $l$ of $A$ if $u_{r,il}\neq0$ for some $i$, and block row $l$ of $B$ if $v_{r,lj}\neq0$ for some $j$.

\begin{theorem}\label{thm:onecolumn}
An algorithm of the form~(\ref{eq:bilinear}) in which no product reads two block columns of $A$ uses $R\ge mkn$ block multiplications, and so does one in which no product reads two block rows of $B$.
\end{theorem}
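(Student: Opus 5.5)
We mirror the proof of Theorem~\ref{thm:prefixsafe}, with block columns of $A$ taking the place of block rows. As there, we first reduce to single-entry blocks: setting $A_{il}=x_{il}X$ and $B_{lj}=y_{lj}Y$ with $XY\neq0$ shows that the coefficients $(U,V,W)$ multiply an $m\times k$ matrix by a $k\times n$ matrix with $R$ scalar products $M_r=f_r(A)\,g_r(B)$. The property ``reads block column $l$ of $A$'' becomes ``$u_{r,il}\neq0$ for some $i$'', which depends only on the coefficients.

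For each inner index $l$, let $\mathcal{C}_l$ be the set of products that read column $l$ of $A$. If no product reads two columns, the sets $\mathcal{C}_1,\dots,\mathcal{C}_k$ are pairwise disjoint, so it suffices to show $|\mathcal{C}_l|\ge mn$ for each $l$.

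To bound $|\mathcal{C}_l|$, we restrict to inputs $A$ supported on column $l$, writing $A=a e_l^{\top}$ with $a\in\mathbb{R}^m$. For such $A$, every product outside $\mathcal{C}_l$ has $f_r(A)=0$. The identity $AB=\sum_r W_r f_r(A)g_r(B)$ then becomes
\[
a\,b^{\top}=\sum_{r\in\mathcal{C}_l}W_r\,f_r(a e_l^{\top})\,g_r(B),
\]
where $b^{\top}$ is row $l$ of $B$ and $W_r$ is the matrix $(w_{r,ij})$. Now fix $B$ to be supported on row $l$, so that $b$ ranges over $\mathbb{R}^n$ and $g_r$ restricts to a linear form $\tilde g_r(b)$.

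The left side is then the $m\times n$ outer-product bilinear map $(a,b)\mapsto ab^{\top}$, expressed as a sum of $|\mathcal{C}_l|$ rank-one bilinear terms. The tensor of this map has rank $mn$: it is the identity on $\mathbb{R}^m\otimes\mathbb{R}^n$, viewed as a $3$-tensor in $\mathbb{R}^m\otimes\mathbb{R}^n\otimes\mathbb{R}^{mn}$. To make this elementary, in the style of the earlier proof, set $a=e_i$. The $mn$ coordinate functions $(a,b)\mapsto a_ib_j$ are linearly independent bilinear forms. Each one, as the $(i,j)$ entry of the left side, is a linear combination of the bilinear forms $f_r(a e_l^{\top})\tilde g_r(b)$ for $r\in\mathcal{C}_l$. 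Since the span of the forms $a_ib_j$ has dimension $mn$, this gives $|\mathcal{C}_l|\ge mn$. Summing over $l$ yields $R\ge kmn$.

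The statement for $B$ follows by the symmetric argument. Let $\mathcal{D}_l$ be the products reading block row $l$ of $B$. Restricting $B=e_l b^{\top}$ and $A=a e_l^{\top}$ gives the same outer-product map, now expressed with $r\in\mathcal{D}_l$. Alternatively, one can transpose: $(AB)^{\top}=B^{\top}A^{\top}$ turns the algorithm into one of the same form with the roles of $U$ and $V$ swapped.

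The only step needing care is ensuring that restricting $A$ to column $l$ does not let products from other $\mathcal{C}_{l'}$ contribute. Disjointness handles this: a product outside $\mathcal{C}_l$ has $u_{r,il}=0$ for all $i$, so $f_r$ vanishes on such $A$. No tensor-rank machinery beyond linear independence is required.
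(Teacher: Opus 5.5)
Your proof is correct and follows the paper's route. Both arguments split the products into disjoint sets indexed by the inner index and show each set has at least $mn$ members by a linear-independence count on inputs supported on column $l$ of $A$ and row $l$ of $B$. The paper counts on the output side: each $E_{ij}$ lies in the span of the $W_r$ with $r\in\mathcal{Q}_l$, via $A=E_{il}$ and $B=E_{lj}$. You instead count the bilinear forms $f_r\tilde g_r$ that span the $a_ib_j$, which is just the other flattening of the same rank-$mn$ tensor. You should delete the stray phrase ``set $a=e_i$''; your argument rightly keeps $a$ free, and fixing it would leave only $n$ independent forms.
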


This time the count runs over the output: on inputs that use only one index of the inner dimension, only the products that read it contribute, and they must produce each of the $mn$ entries of $AB$.

\begin{proof}
With the single-entry blocks of Appendix~\ref{app:prefixsafe}, write $W_r$ for the $m\times n$ matrix of the weights $w_{r,ij}$, so that~(\ref{eq:bilinear}) reads $AB=\sum_r f_r(A)\,g_r(B)\,W_r$.
Let $\mathcal{Q}_l$ be the set of products that read column $l$ of $A$; under the first hypothesis the sets $\mathcal{Q}_1,\dots,\mathcal{Q}_k$ are disjoint.
For $A=E_{il}$ and $B=E_{lj}$, $AB=E_{ij}$, while $f_r(A)\,g_r(B)=u_{r,il}\,v_{r,lj}$ is zero unless $r\in\mathcal{Q}_l$.
Each of the $mn$ matrices $E_{ij}$ is therefore a linear combination of the $W_r$ with $r\in\mathcal{Q}_l$, and since the $E_{ij}$ are linearly independent, $\mathcal{Q}_l$ has at least $mn$ members.
Summing over the $k$ disjoint sets gives $R\ge mkn$.
The sets of products that read row $l$ of $B$ give the second statement in the same way.
\end{proof}

An algorithm with fewer than $mkn$ block multiplications therefore has a product that reads two block columns of $A$, and one that reads two block rows of $B$: some of its block sums add entries from different indices of the inner dimension.
Under $c$, those indices carry scales: an output entry's scale is its row's scale times its column's scale, shared by the $g$ indices of each group, and classical int8 applies it after multiplying the group's codes.
Suppose that each block column of a call lay in a group of its own, as in the control of Section~\ref{sec:certificate}, whose groups of 32 indices give each block column of a call spanning 128 indices its own scale.
The call forms its products from the codes, the scales of its block columns vary independently of one another, and to reproduce classical int8 it would have to apply these scales after its products.
In the lemma below the products are fixed, while the coefficients that combine them may depend on the scales in any way; applied with single-entry blocks, it shows that for arbitrary codes and scales this takes the classical count.

\begin{lemma}\label{lem:scales}
Let $M_1,\dots,M_R$ be bilinear forms in $(A,B)\in\mathbb{R}^{m\times k}\times\mathbb{R}^{k\times n}$, such as the products of an algorithm of the form~(\ref{eq:bilinear}).
Suppose that for every diagonal matrix $\Lambda$ with positive diagonal entries there are real numbers $\gamma_{ij,r}(\Lambda)$ such that $(A\Lambda B)_{ij}=\sum_{r=1}^{R}\gamma_{ij,r}(\Lambda)\,M_r(A,B)$ for all $A$, $B$, $i$ and $j$.
Then $R\ge mkn$.
\end{lemma}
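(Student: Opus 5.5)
The plan is a dimension count in the vector space of bilinear forms on $\mathbb{R}^{m\times k}\times\mathbb{R}^{k\times n}$. Let $\mathcal{S}=\operatorname{span}\{M_1,\dots,M_R\}$, which has dimension at most $R$. The hypothesis says that for every positive diagonal $\Lambda=\operatorname{diag}(\lambda_1,\dots,\lambda_k)$ and every $(i,j)$, the bilinear form
\[
(A,B)\mapsto(A\Lambda B)_{ij}=\sum_{l=1}^{k}\lambda_l\,A_{il}B_{lj}
\]
lies in $\mathcal{S}$. It is an identity in $A$ and $B$ for fixed $\Lambda$, so it is an equality of bilinear forms. I will show that $\mathcal{S}$ contains every elementary form $A_{il}B_{lj}$. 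There are $mkn$ of these forms, and they are linearly independent, so $R\ge\dim\mathcal{S}\ge mkn$.

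First, take $\Lambda=I$, which gives $\sum_l A_{il}B_{lj}\in\mathcal{S}$. Then take $\Lambda=I+E_{ll}$, which is diagonal with positive entries $1$ and $2$. This gives $\sum_{l'}A_{il'}B_{l'j}+A_{il}B_{lj}\in\mathcal{S}$. Since $\mathcal{S}$ is a subspace, subtracting the first form from the second yields $A_{il}B_{lj}\in\mathcal{S}$ for every $i$, $l$, $j$. The positivity constraint on $\Lambda$ causes no trouble, because two positive choices already separate each index $l$. The coefficients $\gamma_{ij,r}(\Lambda)$ are allowed to depend on $\Lambda$ in any way, but this is harmless: only membership in the fixed span $\mathcal{S}$ is used.

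Second, the forms $A_{il}B_{lj}$ are linearly independent. Suppose $\sum c_{ilj}A_{il}B_{lj}=0$ identically. Evaluating at $A=E_{il}$ and $B=E_{lj}$ leaves only the term $c_{ilj}$, so $c_{ilj}=0$.

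There is essentially no hard step. The only point that needs care is to treat the hypothesis as membership of bilinear forms in $\mathcal{S}$, rather than as a pointwise statement whose coefficients could vary with $(A,B)$. The lemma fixes the $\gamma$'s before quantifying over $A$ and $B$, so this reading is the correct one. When the lemma is applied to block algorithms, the single-entry reduction of Appendix~\ref{app:prefixsafe} turns the products of~(\ref{eq:bilinear}) into such bilinear forms, and the bound transfers directly.
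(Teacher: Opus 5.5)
Your proof is correct and is essentially the paper's own argument: you take $\Lambda=I$ and $\Lambda=I+E_{ll}$ and subtract, which places every form $A_{il}B_{lj}$ in the span of $M_1,\dots,M_R$, and you conclude from the linear independence of these $mkn$ distinct monomials. Your explicit independence check (evaluating at $A=E_{il}$, $B=E_{lj}$) and your remark that the $\gamma_{ij,r}(\Lambda)$ are fixed before quantifying over $A$ and $B$ only spell out steps the paper leaves implicit.
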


\begin{proof}
Both $\Lambda=I$ and $\Lambda=I+E_{ll}$ have positive diagonals, and $(A(I+E_{ll})B)_{ij}-(AB)_{ij}=A_{il}B_{lj}$, so each of the $mkn$ forms $A_{il}B_{lj}$ is a linear combination of $M_1,\dots,M_R$.
These forms are distinct monomials and hence linearly independent, so $R\ge mkn$.
\end{proof}

Condition~(iii) of Proposition~\ref{prop:certificate} keeps each call within one group, where each output entry's scale is constant along the call's inner dimension: the call's exact integer product can then join the group's integer product before the group is rescaled, as in classical int8.

\subsection{Proof of Proposition~\ref{prop:certificate}}\label{app:certificate}

Theorems~\ref{thm:prefixsafe} and~\ref{thm:onecolumn} leave a fast realization no choice but to mix rows and, within a call, indices of the inner dimension.
Proposition~\ref{prop:certificate} shows that on bounded integer codes it can still compute the output bits of classical int8, under three conditions: (i)~$L_Ab_A\le127$ and $L_Bb_B\le127$; (ii)~$\max\{L_Wh(L_Ab_A)(L_Bb_B),\,gb_Ab_B\}<2^{31}$ for int32 accumulation, or $<2^{24}$ for exact integers in fp32; and (iii)~each call lies within one group along the inner dimension.

\begin{proof}
The realization quantizes, rescales and accumulates as the classical int8 operator at $c$ does: it converts each group's integer product into the precision that $c$ fixes, as classical int8 converts the same integer, and rescales and sums the results in the order that $c$ fixes.
Computing the integer product tile by tile means that the tiles partition the inner dimension, and by condition~(iii) each call lies within one group, so the calls of a group add up to that group's integer product, the exact product of the codes over its $g$ indices of the inner dimension.
The realization therefore differs from classical int8 only in how it computes these integers, and it suffices to show that they come out exact.

Consider first one call, on tiles of codes of magnitude at most $b_A$ in $A$ and $b_B$ in $B$, split into blocks of inner length $h$, and write $\alpha=L_Ab_A$ and $\beta=L_Bb_B$.
Entry by entry, for every product $r$, every output block $(i,j)$ and every set $S$ of products,
\[
\begin{gathered}
\Bigl|\sum_{i,l}u_{r,il}A_{il}\Bigr|\le\alpha\le127,\qquad
\Bigl|\sum_{l,j}v_{r,lj}B_{lj}\Bigr|\le\beta\le127,\\
|M_r|\le h\,\alpha\beta,\qquad
\Bigl|\sum_{r\in S}w_{r,ij}M_r\Bigr|\le L_W\,h\,\alpha\beta ,
\end{gathered}
\]
by condition~(i) and the definitions of $L_A$, $L_B$ and $L_W$.
Each bound holds for the sum of the absolute values of the terms it covers (the codes in a block sum, the $h$ products of two block-sum entries in an entry of $M_r$, the weighted products in an output block), so it also holds for every partial sum formed on the way, in any order.
The block sums are therefore exact int8 values, and since integer weights give $L_W\ge1$, every product and every partial sum formed in the call's multiplications and combinations has magnitude at most $L_Wh\alpha\beta$.
Condition~(ii) keeps this below $2^{31}$, so that int32 accumulation is exact, or, for fp32, below $2^{24}$, where fp32 represents every integer and computes exactly any sum or product whose exact value is such an integer.
The call therefore returns its integer product exactly, since its coefficients satisfy~(\ref{eq:bilinear}).

Consider next the calls of one group.
Each returns its exact integer product before that is added to the group's running sum, and for a given output entry the calls cover disjoint sets of the group's $g$ inner indices, so every value of the running sum is a sum of at most $g$ terms $A_{tp}B_{pj}$, with $t$ a row of $A$ and $p$ an index of the inner dimension, of magnitude at most $gb_Ab_B$.
Condition~(ii) keeps this within the same exact range, and every group's integer product therefore comes out exact.

In fp32 the integers are held as floating-point numbers, and a product such as $(-1)\cdot0$ gives $-0$ where the integer is $0$; converted as classical int8 converts the integer, it enters the rescaling as $+0$.
For the operator we run, any difference between the two representations of a zero group result disappears in any case in the sum over groups, which starts from $+0$, rounds each addition separately and, under IEEE round-to-nearest, never holds $-0$.
The realization's output is therefore that of the classical int8 operator at $c$, bit for bit.
\end{proof}

\section{Criteria, models and controls for the sign variants}\label{app:magnitude}

\subsection{What the criteria read, and what the operands decide}\label{app:criteria}

Each stability criterion analyzed in Section~\ref{sec:magnitude} is constant across all sign variants of two-level Strassen.
The criteria are computed from the coefficients of~(\ref{eq:bilinear}); write $u_r$, $v_r$ and $w_r$ for the coefficients $u_{r,il}$, $v_{r,lj}$ and $w_{r,ij}$ of product $r$.
The measure of expected error of \citet{xie2026fast} is $\Phi=\sum_r\|u_r\|_2^2\,\|v_r\|_2^2\,\|w_r\|_2^2$; the nonzero coefficients are counted over $U$, $V$ and $W$ together, and the largest coefficient magnitude is taken over all three.
With $\alpha_r$ and $\beta_r$ the numbers of nonzero entries of $u_r$ and $v_r$, $a_r=\|u_r\|_1$, $b_r=\|v_r\|_1$, and $\gamma_{ij}$ the number of products with $w_{r,ij}\neq0$, the prefactor and the stability factor of \citet[Definitions~4.1 and~4.2]{ballard2015improving} are
\[
Q=\max_{i,j}\Bigl(\gamma_{ij}+\max_{r:\,w_{r,ij}\neq0}(\alpha_r+\beta_r)\Bigr),\qquad E=\max_{i,j}\sum_r a_r\,b_r\,|w_{r,ij}|.
\]
Table~\ref{tab:criteria} lists four of the \NIclrActOneQOneFourOrbitN{} sign variants, chosen across the range of their perplexity ratios on Qwen2.5-14B-Instruct.
Each criterion takes one value on all four, as on all \NIclrActOneQOneFourOrbitN{}, while the ratio runs from \NIclrActOneQOneFourOrbitMin{} to \NIclrActOneQOneFourOrbitMax.
The parameters $Q$ and $E$ of \citet{ballard2015improving} read the same coefficient magnitudes and nonzero pattern, so they too take one value on all \NIclrActOneQOneFourOrbitN.

\begin{table}[h]
\caption{The stability criteria take one value on every sign variant of two-level Strassen; the measured perplexity ratio does not. Four of the \NIclrActOneQOneFourOrbitN{} sign variants on Qwen2.5-14B-Instruct under the FP8 schedule of Section~\ref{sec:support}; the ratio is the WikiText-2 perplexity over that of the bf16 model, and the median row is the upper of the two middle ratios. $\Phi$ is the measure of expected error that \citet{xie2026fast} compute from the coefficients; the nonzero coefficients are counted over $U$, $V$ and $W$.}
\label{tab:criteria}
\centering
\small
\begin{tabular}{@{}l c c c r@{}}
\toprule
sign variant & $\Phi$ & nonzero coefficients & largest $|$coefficient$|$ & perplexity ratio \\
\midrule
smallest ratio & \NIclrActOneOrbitFourFourFourPhi & \NIclrActOneOrbitFourFourFourNnz & \NIclrActOneOrbitFourFourFourMaxAbs & \NIclrActOneQOneFourOrbitMin \\
median ratio & \NIclrActOneOrbitFourFourFourPhi & \NIclrActOneOrbitFourFourFourNnz & \NIclrActOneOrbitFourFourFourMaxAbs & \NIclrActOneQOneFourOrbitMedian \\
Strassen's original signs & \NIclrActOneOrbitFourFourFourPhi & \NIclrActOneOrbitFourFourFourNnz & \NIclrActOneOrbitFourFourFourMaxAbs & \NIclrActOneQOneFourOrbitIdentity \\
largest ratio & \NIclrActOneOrbitFourFourFourPhi & \NIclrActOneOrbitFourFourFourNnz & \NIclrActOneOrbitFourFourFourMaxAbs & \NIclrActOneQOneFourOrbitMax \\
\bottomrule
\end{tabular}
\end{table}

Error differences among sign variants depend on the operand distribution.
In the models, the schedule of Section~\ref{sec:support} rounds the block sums of a layer's input to FP8 (e4m3) with one scale for every 128 consecutive entries of a row, and those of its weight, first stored in bf16, with one scale for every $128\times128$ block; DeepGEMM multiplies them, returns each product in bf16, and the products are combined in fp32.
On synthetic $128\times128$ matrices we use a simplified form, one scale per row of each block sum of $A$ and per column of each block sum of $B$, with the products computed in fp32 on a CPU.
For every one of the \NIclrActOneQOneFourOrbitN{} sign variants we draw 200 pairs of operands with independent entries from each of two distributions and average the variant's relative error in the Frobenius norm separately over the even-numbered and the odd-numbered pairs.
With entries uniform on $[-1,1]$ the variants' average errors span a factor of about \NIclrActOneOrbitFourFourFourSpreadPmOne, and the two halves' averages correlate across the variants at only \NIclrActOneOrbitFourFourFourSplitHalfPmOne, consistent with the symmetry argument of Section~\ref{sec:magnitude}.
With entries uniform on $[0,1]$ they span a factor of \NIclrActOneOrbitFourFourFourSpreadZeroOne, and the halves correlate at \NIclrActOneOrbitFourFourFourSplitHalfZeroOne: the variants differ, and differ reproducibly.
On such matrices \citet{malik2019randomization} also see random signs lower the error, while on Gaussian ones their variants perform about the same.
\citet[Section~5.1]{ballard2015improving} report the same contrast between whole algorithms: among three algorithms of one shape that each minimize one of their criteria, the errors differ clearly on operands uniform on $[0,1]$ and are practically indistinguishable on operands uniform on $[-1,1]$.

\subsection{Seven models, and an outlier test}\label{app:models}

Inside a model the operands are its own activations and weights, so the spread of the sign variants depends on the model.
Table~\ref{tab:sevenmodels} repeats the measurement of Section~\ref{sec:magnitude} on seven models \citep{qwen25,qwen3,llama3,olmo2024olmo,young2024yi}.
The spread runs from \NIclrActOneSevenQSevenbSpread$\times$ on Qwen2.5-7B-Instruct to \NIclrActOneQOneFourOrbitSpread$\times$ on Qwen2.5-14B-Instruct, while the deployed FP8 kernel's ratio ranges only from \NIclrActOneSevenYiSixbClassicalFpEightNone{} to \NIclrActOneSevenQOneFourbClassicalFpEightNone.

\begin{table}[h]
\caption{The spread of the sign variants of two-level Strassen differs by model, while the deployed FP8 kernel stays close to bf16 on all seven. Perplexity ratios under the FP8 schedule of Section~\ref{sec:support}, each the perplexity of eight 2048-token chunks of WikiText-2 over the bf16 model's on the same chunks. Runs whose producer checksum changed during execution are omitted on Qwen2.5-7B-Instruct and Qwen3-8B. ``Variants'' counts the sign variants with a valid measurement; the median is the middle value, the upper of the two middle values when the count is even; the spread is the largest ratio over the smallest; ``$>2$'' counts the variants whose ratio exceeds 2; the last column is the deployed FP8 kernel's ratio.}
\label{tab:sevenmodels}
\centering
\small
\setlength{\tabcolsep}{4.5pt}
\begin{tabular}{@{}l r r r r r r r@{}}
\toprule
model & variants & smallest & median & largest & spread & $>2$ & FP8 kernel \\
\midrule
Qwen2.5-3B & \NIclrActOneSevenQThreebN & \NIclrActOneSevenQThreebMin & \NIclrActOneSevenQThreebMedian & \NIclrActOneSevenQThreebMax & \NIclrActOneSevenQThreebSpread & \NIclrActOneSevenQThreebNAboveTwo & \NIclrActOneSevenQThreebClassicalFpEightNone \\
Qwen2.5-7B-Instruct & \NIclrActOneSevenQSevenbN & \NIclrActOneSevenQSevenbMin & \NIclrActOneSevenQSevenbMedian & \NIclrActOneSevenQSevenbMax & \NIclrActOneSevenQSevenbSpread & \NIclrActOneSevenQSevenbNAboveTwo & \NIclrActOneSevenQSevenbClassicalFpEightNone \\
Qwen2.5-14B-Instruct & \NIclrActOneSevenQOneFourbN & \NIclrActOneQOneFourOrbitMin & \NIclrActOneQOneFourOrbitMedian & \NIclrActOneQOneFourOrbitMax & \NIclrActOneQOneFourOrbitSpread & \NIclrActOneSevenQOneFourbNAboveTwo & \NIclrActOneSevenQOneFourbClassicalFpEightNone \\
Qwen3-8B & \NIclrActOneSevenQThreexEightbN & \NIclrActOneSevenQThreexEightbMin & \NIclrActOneSevenQThreexEightbMedian & \NIclrActOneSevenQThreexEightbMax & \NIclrActOneSevenQThreexEightbSpread & \NIclrActOneSevenQThreexEightbNAboveTwo & \NIclrActOneSevenQThreexEightbClassicalFpEightNone \\
Llama-3.1-8B-Instruct & \NIclrActOneSevenLlEightbN & \NIclrActOneSevenLlEightbMin & \NIclrActOneSevenLlEightbMedian & \NIclrActOneSevenLlEightbMax & \NIclrActOneSevenLlEightbSpread & \NIclrActOneSevenLlEightbNAboveTwo & \NIclrActOneSevenLlEightbClassicalFpEightNone \\
OLMo-2-7B-Instruct & \NIclrActOneSevenOlSevenbN & \NIclrActOneSevenOlSevenbMin & \NIclrActOneSevenOlSevenbMedian & \NIclrActOneSevenOlSevenbMax & \NIclrActOneSevenOlSevenbSpread & \NIclrActOneSevenOlSevenbNAboveTwo & \NIclrActOneSevenOlSevenbClassicalFpEightNone \\
Yi-6B & \NIclrActOneSevenYiSixbN & \NIclrActOneSevenYiSixbMin & \NIclrActOneSevenYiSixbMedian & \NIclrActOneSevenYiSixbMax & \NIclrActOneSevenYiSixbSpread & \NIclrActOneSevenYiSixbNAboveTwo & \NIclrActOneSevenYiSixbClassicalFpEightNone \\
\bottomrule
\end{tabular}
\end{table}

Since the spread varies this much, one may ask whether a simple property of a model's activations predicts it.
A separate test, under the same schedule and on three further models, checked one such rule: that a model with strong enough activation outliers has a catastrophic sign variant, one whose perplexity exceeds ten times that of its own bf16 model.
Its outlier ratio is the largest ratio of the Euclidean norm of a token row to that of the median row, over the inputs of every linear layer in the body of the bf16 model (all but the output layer) and over four 2048-token chunks of WikiText-2.
Mistral-7B-v0.1 \citep{jiang2023mistral} and Llama-2-7b-hf \citep{touvron2023llama} were chosen before their variants were run, as models whose strong activation outliers should give each a catastrophic variant if the rule held.
Their outlier ratios, \NFastmatmulWThreeOhNineMistralSevenOutlierRatio{} and \NFastmatmulWThreeOhNineLlamaTwoSevenOutlierRatio, exceed the \NFastmatmulWThreeOhNineQThirtyTwoOutlierRatio{} of Qwen2.5-32B-Instruct, yet neither has a catastrophic variant (\NFastmatmulWThreeOhNineMistralSevenCatastrophicCount{} of \NFastmatmulWThreeOhNineMistralSevenGaugeCount{} and \NFastmatmulWThreeOhNineLlamaTwoSevenCatastrophicCount{} of \NFastmatmulWThreeOhNineLlamaTwoSevenGaugeCount), while Qwen2.5-32B-Instruct has \NFastmatmulWThreeOhNineQThirtyTwoCatastrophicCount{} of \NFastmatmulWThreeOhNineQThirtyTwoGaugeCount.
So no threshold on the outlier ratio separates these three models into those above it, with a catastrophic variant, and those below it, without one\ifhmode\unskip\fi.

\subsection{A rotation, other summation orders and repeated runs}\label{app:order}

Changing the operands without changing their product moves the spread, and not in one direction.
Methods for removing outliers, such as QuaRot and SpinQuant \citep{ashkboos-quarot-2024,liu-spinquant-2024}, rotate the operands, $A\mapsto AH$ and $B\mapsto H^{\top}B$ with $H$ orthogonal, which leaves $AB$ unchanged.
With the simplified rounding of Appendix~\ref{app:criteria}, on $128\times128$ products with one level of Strassen and 80 draws of the operands, a randomized Hadamard rotation narrows the spread of the average errors (the largest over the smallest) over its 32 stored sign patterns, which repeat some sign variants of Section~\ref{sec:signvariants}, from \NIclrActOneRotationUniformDcPlain{} to \NIclrActOneRotationUniformDcRotated, when every column of $A$ carries the same offset, and widens it, from \NIclrActOneRotationOutlierPlain{} to \NIclrActOneRotationOutlierRotated, when the columns' offsets have random signs and three percent of them are forty times larger than the rest.

With the coefficients and the input fixed, the order in which the \NIclrActFourStrassenTwoMultsFast{} products are added is one more choice that leaves the product exact.
We ran sixteen realizations of two-level Strassen on Qwen2.5-14B-Instruct, eight sign variants and eight that also permute blocks, each under its original order and seven sampled reorderings of its products; the sign variants of Section~\ref{sec:magnitude} all keep the original order, so their spread does not come from reordering.
A reordering permutes the products throughout, in how they are prepared and launched as well as in how every output block adds them.
On one of them the order alone moves the perplexity ratio from \NIclrActOneCFourSevenSevenMin{} to \NIclrActOneCFourSevenSevenMax, a \NIclrActOneCFourSevenSevenSpan$\times$ span.
On the classical path, reordering the deployed FP8 kernel's scaling groups along the inner dimension, together with their scales, moves its perplexity ratio across eight orders by a factor of only \NIclrActOneCThreeNineTwoSpread.

Within the tested container instance, none of these differences is run-to-run variation.
Eight fresh processes on eight GPUs of one container instance return, for each of the two orders at that realization's extremes, exactly the same perplexity ratio (relative difference \NIclrActOneCFourEightZeroRelSpread).
On Qwen2.5-14B-Instruct, running one of four 1,024-token inputs twice, or again after a different input, leaves every logit unchanged under the unrepaired realization, repaired~\#2 and both row-local controls; for repaired~\#1, Appendix~\ref{app:scoring} reports how its chosen answers reproduce.

The rescore of Section~\ref{sec:magnitude} that replaces the later text with filler uses 280 windows of 1,024 tokens, alternating WikiText-2 test and C4 validation text, with four scored positions per window and two filler texts per position; its interval resamples the windows.

\section{Details of the tests for prefix invariance}\label{app:support}

\subsection{What the two inputs of a comparison share}\label{app:scoring}

The comparisons of Section~\ref{sec:support} hold everything but the replaced text fixed.
Both inputs run in one process with batch size one, through one realization with the same weights and the same order of its products, and they have the same length, so in the fast realizations their token rows are padded with zeros (to a multiple of 16) and split into blocks identically.
An answer is scored by summing the log probabilities of its tokens, each read at the position before it; on the original input a single forward pass over the question and the answer gives all of them, and each candidate answer is scored on its own.
For the answer token at position $p$, the second input keeps positions $1$ to $p-1$, the token's allowed prefix, and replaces position $p$ and every later position with as many new tokens, so the original token is still the one scored, from position $p-1$; an answer's score under replacement therefore adds up scores from separate forward passes, one for each of its tokens.
A realization that keeps prefix invariance gives every answer token the same score from both inputs, and so every item the same choice.

The two replacements differ only in where the new tokens come from.
The filler is text from the validation split of C4 \citep{raffel2020exploring}: each item gets one window of it, shared by all its answers and positions, and each replacement takes that window's first tokens.
The greedy continuation is generated by the bf16 model from the allowed prefix, one most likely token at a time, for exactly the replaced length; each is generated once for its prefix and length and reused for every realization.
OpenBookQA and ARC-Easy each contribute the same first \NIclrActTwoQOneFourObqaItems{} test items in every condition; this subset size follows the compute budget for the separate forward pass required at every scored answer-token position.
On the \NIclrActTwoQOneFourObqaPlausibleWThreeNScored{} answers of OpenBookQA the continuation built at an answer's first token begins with that token \NIclrActTwoQOneFourObqaPlausibleWThreeNEqual{} times.
The answers changed in Table~\ref{tab:support} count the items whose chosen answer differs between the two inputs, including a change from one wrong answer to another, with 95\% intervals from resampling the items.

Repeating the comparison reproduces its outcome.
For the unrepaired realization, repaired~\#2 and both row-local controls, repeating an input within one process gives identical logits (Appendix~\ref{app:order}).
Across two runs of the whole OpenBookQA comparison in different container instances, every chosen answer of repaired~\#1 and of both row-local controls was reproduced, although some log probabilities differed.

Table~\ref{tab:tokenchanges} resolves the same comparisons at every scored answer-token position, including positions from items whose chosen answer does not change.

\begin{table}[h]
\caption{Token-level changes on Qwen2.5-14B-Instruct under the replacements of Table~\ref{tab:support}. At each scored candidate position, $\Delta$ is the original target token's log probability on the original input minus its log probability after replacement; all \NIclrTokObqaCount{} OpenBookQA or \NIclrTokArcCount{} ARC-Easy positions are included per realization and replacement. The last three columns summarize $|\Delta|$ in nats; the 90th percentile is the empirical inverse cumulative distribution function. Both row-local controls have \NIclrTokControlChanged{} changed target scores and identical full logit rows under all three comparisons.}
\label{tab:tokenchanges}
\centering
\small
\setlength{\tabcolsep}{4pt}
\begin{tabular}{@{}l l r r r r@{}}
\toprule
condition & realization & changed (\%) & mean & median & 90th pct. \\
\midrule
OpenBookQA, continuation & repaired~\#1 & \NIclrTokObqaGreedyRepairOnePct & \NIclrTokObqaGreedyRepairOneMean & \NIclrTokObqaGreedyRepairOneMedian & \NIclrTokObqaGreedyRepairOnePninety \\
 & repaired~\#2 & \NIclrTokObqaGreedyRepairTwoPct & \NIclrTokObqaGreedyRepairTwoMean & \NIclrTokObqaGreedyRepairTwoMedian & \NIclrTokObqaGreedyRepairTwoPninety \\
\addlinespace[3pt]
OpenBookQA, filler & repaired~\#1 & \NIclrTokObqaFillerRepairOnePct & \NIclrTokObqaFillerRepairOneMean & \NIclrTokObqaFillerRepairOneMedian & \NIclrTokObqaFillerRepairOnePninety \\
 & repaired~\#2 & \NIclrTokObqaFillerRepairTwoPct & \NIclrTokObqaFillerRepairTwoMean & \NIclrTokObqaFillerRepairTwoMedian & \NIclrTokObqaFillerRepairTwoPninety \\
\addlinespace[3pt]
ARC-Easy, filler & repaired~\#1 & \NIclrTokArcFillerRepairOnePct & \NIclrTokArcFillerRepairOneMean & \NIclrTokArcFillerRepairOneMedian & \NIclrTokArcFillerRepairOnePninety \\
 & repaired~\#2 & \NIclrTokArcFillerRepairTwoPct & \NIclrTokArcFillerRepairTwoMean & \NIclrTokArcFillerRepairTwoMedian & \NIclrTokArcFillerRepairTwoPninety \\
\bottomrule
\end{tabular}
\end{table}

\paragraph{The repair rules.}
For the repairs, a token row $x_t$ uses
\[
d_t=2^{\left\lceil \log_2\!\left(\max\{\operatorname{RMS}(x_t),10^{-3}\}\right)\right\rceil},
\]
with the root mean square (RMS) taken over the full row: we divide the original row by $d_t$ before the fast mixing and multiply its decoded output row by $d_t$ afterward.
For each weight-block-sum entry, we take its bf16 prequantization value and its original DeepGEMM block scale, consider the two e4m3 values at that scale whose decoded values bracket it, and initialize at the nearer one.
For the $R$ product terms at the same position, let $e_r$ be the decoded candidate minus the bf16 prequantization block sum and set
\[
H_{rs}=\langle w_r,w_s\rangle\,\langle u_r,u_s\rangle,
\]
where $u_r$ and $w_r$ are the coefficient vectors of product $r$ in~(\ref{eq:bilinear}).
We sweep $r=1,\ldots,R$ in order; a candidate change $\delta$ is accepted when its fp32-computed change in $e^\top He$, $2\delta(He)_r+\delta^2H_{rr}$, is negative.
We stop after a sweep with no change and make at most 12 sweeps.
This weight-only search uses no activation or text calibration examples.
Repaired~\#1 uses one Strassen level and one classical level; repaired~\#2 uses two levels of the $\Phi$-best seven-product algorithm of Section~\ref{sec:support}, with the coefficient arrays distributed with the implementation.

\subsection{The \NIclrActTwoExemplarsNMoved{} changed answers}\label{app:exemplars}

Table~\ref{tab:exemplars} lists every OpenBookQA item whose chosen answer repaired~\#1 changes when the text after the allowed prefix is replaced by the greedy continuation, and on all \NIclrActTwoExemplarsControlsQuiet{} of them both row-local controls keep their answer.

\begin{table}[h]
\caption{The \NIclrActTwoExemplarsNMoved{} OpenBookQA items on which repaired~\#1 changes its chosen answer when the text after the allowed prefix is replaced by the bf16 model's greedy continuation, on Qwen2.5-14B-Instruct, grouped by how the change affects correctness. Each entry gives the item's number in the test split (counted from 0) and its question, then the answer chosen on the original input $\rightarrow$ the answer chosen under replacement; \checkmark{} marks a chosen answer that is correct, and the correct answer is given on the right when neither is.}
\label{tab:exemplars}
\centering
\small
\setlength{\tabcolsep}{4pt}
\setlength{\exmidw}{\dimexpr\textwidth-\exitemw-\excorrw-4\tabcolsep\relax}
\begin{tabular}{@{}>{\raggedleft\arraybackslash\leavevmode\color{exmuted}}p{\exitemw} >{\raggedright\arraybackslash}p{\exmidw} >{\raggedleft\arraybackslash\leavevmode\color{exmuted}}p{\excorrw}@{}}
\toprule
\multicolumn{2}{@{}l}{\exgroup{Wrong before and after the change} (\NIclrActTwoExemplarsNeitherIsGold{})} & \textit{correct answer} \\[3pt]
37 & \exspan{What is used for sensing visual things?} \\
 & \exans{tibia} $\rightarrow$ \exans{nerves} & \textit{cornea} \\\addlinespace[5pt]
41 & \exspan{What is different about birth in humans and chickens?} \\
 & \exans{Mother} $\rightarrow$ \exans{Fertilization} & \textit{the hard shell} \\\addlinespace[5pt]
50 & \exspan{Some berries may be eaten by} \\
 & \exans{a bear or wolf} $\rightarrow$ \exans{a bear or lion} & \textit{a bear or person} \\\addlinespace[5pt]
53 & \exspan{To improve health, what is a good strategy?} \\
 & \exans{restaurant food} $\rightarrow$ \exans{business trip} & \textit{a spa trip} \\\addlinespace[5pt]
54 & \exspan{A girl and her mom have the same} \\
 & \exans{date of birth} $\rightarrow$ \exans{shirt} & \textit{number of toenails} \\\addlinespace[5pt]
84 & \exspan{Ocean water contains} \\
 & \exans{copious amounts of seltzer} $\rightarrow$ \exans{scant amounts of sodium chloride} & \textit{copious amounts of the combination of Na and Cl} \\\addlinespace[5pt]
93 & \exspan{When looking for good soil for plants, typically what is optimal?} \\
 & \exans{compact and hard} $\rightarrow$ \exans{dry and sandy} & \textit{malleable and nutritious} \\\addlinespace[5pt]
103 & \exspan{A farmer harvests seeds from some plants, such as tomatoes, in order to plant them later on. These seeds, once planted} \\
 & \exans{have their own sunlight} $\rightarrow$ \exans{have their own dirt} & \textit{contain their necessary nutrition} \\\addlinespace[5pt]
227 & \exspan{How is electricity produced from the ocean?} \\
 & \exans{chemical reactions produced from the salt in the water} $\rightarrow$ \exans{decaying organic material from sealife} & \textit{energy is accessed underwater from tides} \\
\midrule
\multicolumn{2}{@{}l}{\exgroup{From correct to wrong} (\NIclrActTwoExemplarsNaturalIsGold{})} &  \\[3pt]
119 & \exspan{Endangered pandas are sometimes} \\
 & \exspan{\exans{confined to enclosures to be viewed by the public}\exok{} $\rightarrow$ \exans{accidentally dropped into volcanoes}} \\\addlinespace[5pt]
132 & \exspan{A desert environment is} \\
 & \exspan{\exans{arid, parched, and sun-baked}\exok{} $\rightarrow$ \exans{lush, green, and tropical}} \\\addlinespace[5pt]
205 & \exspan{There are various creatures that live in forests, such as} \\
 & \exspan{\exans{whitetails}\exok{} $\rightarrow$ \exans{giant fish}} \\
\midrule
\multicolumn{2}{@{}l}{\exgroup{From wrong to correct} (\NIclrActTwoExemplarsBlindIsGold{})} &  \\[3pt]
136 & \exspan{What is the formula of the substance which best helps plants grow} \\
 & \exspan{\exans{CO2} $\rightarrow$ \exans{H2O}\exok{}} \\\addlinespace[5pt]
204 & \exspan{Through DNA, a rabbit will have long ears if} \\
 & \exspan{\exans{parents were also rabbits} $\rightarrow$ \exans{genetic contributors had long ears}\exok{}} \\
\bottomrule
\end{tabular}
\end{table}

\subsection{How a later row reaches an earlier output}\label{app:rowmap}

Section~\ref{sec:reach} traces the dependence to the block sums of the linear layers, and a single multiplication shows it without a model.
We multiply a Gaussian input of 40 rows by a Gaussian $512\times512$ weight matrix, padding the input with zero rows to 48 so that two-level Strassen splits its rows into four blocks of 12.
The block sums are rounded to FP8 as the models' schedule of Appendix~\ref{app:criteria} rounds them, without repairs, and the products are computed in fp32 on the rounded values; a row counts as changed when any of its entries differs at all.
We then replace row 40, the last row before the padding, by a fresh Gaussian row and record which output rows change.
Under two-level Strassen with Strassen's original signs, and under the algorithm of repaired~\#2 without its repairs, output rows 4, 16 and 28 change besides row 40 itself, and no other: the rows at the same offset in the other three blocks, all earlier than row 40.
With the classical product computed in fp32, from the unrounded operands or from operands rounded to FP8 in the same way, only row 40 changes.

\subsection{Other models}\label{app:windows}

The dependence is not particular to multiple-choice scoring: it changes the next-token predictions of Qwen2.5-14B-Instruct, Qwen2.5-3B, Llama-3.1-8B-Instruct, OLMo-2-7B-Instruct and Qwen3-8B.
For each model we take 256 windows of 1,024 tokens, alternating between WikiText-2 and C4, keep each window's first 256 tokens and replace the rest by the corresponding tokens of two other windows, one from each corpus.
Each window and replacement form a pair, and we compare the top-1 predictions at positions 1 to 255, whose predicted tokens lie in the kept quarter.
Table~\ref{tab:reach} gives the percentage of these predictions that change for each model and realization; under the two row-local controls together, none of the \NIclrActTwoCFourSixThreeControlsPositions{} predictions compared on the five models changes (\NIclrActTwoCFourSixThreeControlsFlips{} flips).

\begin{table}[h]
\caption{The dependence on replaced text, per model. Window test: the percentage of next-token predictions in the kept first quarter of a 1,024-token window that change when the rest of the window is replaced. Packed test: two requests in one forward pass, the number of pairs and the percentage of pairs in which the first request's most likely next token changes when the second request is replaced. The packed test ran on three models, under the unrepaired realization and repaired~\#2; both row-local controls change no prediction in either test.}
\label{tab:reach}
\centering
\small
\setlength{\tabcolsep}{5pt}
\begin{tabular}{@{}l r r r r r r@{}}
\toprule
 & \multicolumn{3}{c}{window test (\%)} & \multicolumn{3}{c}{packed test} \\
\cmidrule(lr){2-4}\cmidrule(l){5-7}
model & unrepaired & repaired~\#1 & repaired~\#2 & pairs & unrepaired (\%) & repaired~\#2 (\%) \\
\midrule
Qwen2.5-14B-Instruct & \NIclrActTwoCFourSixThreeQOneFourUnrepairedFlipPct & \NIclrActTwoCFourSixThreeQOneFourRepairedOneFlipPct & \NIclrActTwoCFourSixThreeQOneFourRepairedTwoFlipPct & \NIclrActTwoCFourSixFourQOneFourPairs & \NIclrActTwoCFourSixFourQOneFourUnrepairedGreedyPct & \NIclrActTwoCFourSixFourQOneFourRepairedTwoGreedyPct \\
Qwen2.5-3B & \NIclrActTwoCFourSixThreeQThreeUnrepairedFlipPct & \NIclrActTwoCFourSixThreeQThreeRepairedOneFlipPct & \NIclrActTwoCFourSixThreeQThreeRepairedTwoFlipPct & \NIclrActTwoCFourSixFourQThreePairs & \NIclrActTwoCFourSixFourQThreeUnrepairedGreedyPct & \NIclrActTwoCFourSixFourQThreeRepairedTwoGreedyPct \\
Llama-3.1-8B-Instruct & \NIclrActTwoCFourSixThreeLlEightUnrepairedFlipPct & \NIclrActTwoCFourSixThreeLlEightRepairedOneFlipPct & \NIclrActTwoCFourSixThreeLlEightRepairedTwoFlipPct & -- & -- & -- \\
OLMo-2-7B-Instruct & \NIclrActTwoCFourSixThreeOlSevenUnrepairedFlipPct & \NIclrActTwoCFourSixThreeOlSevenRepairedOneFlipPct & \NIclrActTwoCFourSixThreeOlSevenRepairedTwoFlipPct & \NIclrActTwoCFourSixFourOlSevenPairs & \NIclrActTwoCFourSixFourOlSevenUnrepairedGreedyPct & \NIclrActTwoCFourSixFourOlSevenRepairedTwoGreedyPct \\
Qwen3-8B & \NIclrActTwoCFourSixThreeQThreeEightUnrepairedFlipPct & \NIclrActTwoCFourSixThreeQThreeEightRepairedOneFlipPct & \NIclrActTwoCFourSixThreeQThreeEightRepairedTwoFlipPct & -- & -- & -- \\
\bottomrule
\end{tabular}
\end{table}

\subsection{Other requests}\label{app:packed}

Requests that share a forward pass are exposed in the same way.
We split a 1,024-token window into a request A, its first quarter, and a request B, the rest, pack both into one input and build no key--value cache.
The position indices restart at B, which the model's implementation reads as two packed sequences, each with a causal attention mask of its own; in a control run with the bf16 model, replacing A leaves every logit of B unchanged, which confirms that the attention is separated.
We then replace B by the corresponding tokens of two other windows and compare A's next-token distribution at its last position, the token A would emit next.
Over the \NIclrActTwoCFourSixFourPairs{} pairs on Qwen2.5-14B-Instruct, OLMo-2-7B-Instruct and Qwen2.5-3B, that distribution changes on every pair (\NIclrActTwoCFourSixFourFastTvChangedMin{} of \NIclrActTwoCFourSixFourPairs, with a total variation above zero) under the unrepaired realization and under repaired~\#2, and on none under the controls; Table~\ref{tab:reach} gives, per model, how often its most likely token changes.

On Qwen2.5-3B we also record, over 64 such pairs, the key and value projections of A's rows at ten of the model's 36 layers (layers 0 to 5, 8, 16, 24 and 35).
Under the unrepaired realization and repaired~\#2 every recorded projection differs, in every row of A and on every pair, and the differences already appear at the first layer, before any attention has combined positions; under both controls none differs.

\section{Construction details}\label{app:construction}

\paragraph{Bitwise checks in the models.}
In the verification runs of Section~\ref{sec:certificate}, the output of two-level Strassen at code bound \NIclrActFourStrassenTwoHeadroomBA{} equals classical int8's bit for bit in all \NIclrTwinQFourteenEqual{} invocations of a replaced linear layer on Qwen2.5-14B-Instruct, \NIclrTwinLlEightEqual{} on Llama-3.1-8B-Instruct, \NIclrTwinOlSevenEqual{} on OLMo-2-7B-Instruct and \NIclrTwinQThreeEightEqual{} on Qwen3-8B.

\paragraph{Quantization specification for the model runs.}
The quantizer takes fp32 inputs and computes scales in fp32, grouping 128 consecutive entries along the inner dimension, with a separate activation scale for each token row and weight scale for each output column within a group.
For a group $x$, its scale is $d=\max|x|/b$, with $d=1$ for an all-zero group, and its codes are $x/d$ rounded to nearest with ties to even, with no additional clipping.
The inner dimension is zero-padded to a multiple of 128.
For each group in ascending inner-dimension order, each entry $P_{tj}$ of the exact integer group product is rescaled as $(P_{tj}d_{A,t})d_{B,j}$ using two separate fp32 multiplications and added to a zero-initialized fp32 output.
After the last group the output is cast to the model input dtype, bf16 here, and the bias is then added.
The model runs at $b=\NIclrActFourStrassenTwoHeadroomBA{}$ and $b=127$ differ only in the code bound.

\paragraph{The code bound in context.}
Quantized fast convolution meets the same growth in its transformed tiles and handles it by clipping them and retraining \citep{mori2024wino}; condition~(i) of Proposition~\ref{prop:certificate} instead keeps the codes small enough that no block sum overflows.

\paragraph{The code bounds of condition (i).}
The code bounds $\lfloor127/L_A\rfloor$ and $\lfloor127/L_B\rfloor$ that condition~(i) admits are sharp, since codes whose signs follow the coefficients reach them; \citet[Theorem~3.1]{dumas2009fflas} prove a sharp bound of this kind for the integer intermediates of Winograd's variant of Strassen's algorithm before reduction modulo a prime.
For two-level Strassen the bound \NIclrActFourStrassenTwoHeadroomBA{} makes the quantization step \NIclrActFourStepRatioStrassenSqPredicted{} times as coarse as that of full-range int8.

\paragraph{Other algorithms with 49 multiplications.}
Other algorithms with the same number of multiplications pay more than two-level Strassen: each of the \NIclrActFourCatalogueN{} algorithms with 49 multiplications for $4\times4$ blocks released with AlphaTensor \citep{Fawzi_2022} has $L_A\ge\NIclrActFourCatalogueLAMin$, so none admits a code bound above \NIclrActFourCatalogueBABest.

\paragraph{Dense uniform codebooks on FP8.}
The code bound is set by the need for every block sum to stay exact, and an 8-bit floating-point format meets that need with far fewer evenly spaced codes than int8.
In a binary floating-point format with $p$ significant bits, the implicit bit included, a dense uniform codebook $\{0,\pm\delta,\dots,\pm J\delta\}$ whose every sum of $L$ codes is representable in the same format has $J\le\lfloor2^p/L\rfloor$.
Such sums include every multiple $q\delta$ with $|q|\le LJ$, so it suffices that one multiple beyond $2^p$ is not representable: writing $\delta=a\cdot2^s$ with $a$ odd, $(2^p+1)\delta$ has the odd part $(2^p+1)a\ge2^p+1$, which needs more than $p$ significant bits.
Hence $LJ\le2^p$, and a finite exponent range can only lower the bound.
For e4m3, with four exponent bits and three stored significand bits ($p=4$), and the $L_A=\NIclrActFourStrassenTwoHeadroomLA$ codes that a block sum of two-level Strassen adds, at most nine evenly spaced codes remain (for example $-4,\dots,4$), against the $2\cdot\NIclrActFourStrassenTwoHeadroomBA+1=63$ codes that condition~(i) admits in int8 at the same eight bits.

\paragraph{The corrected computation's certificate.}
At code bound 127, with groups of 128 and blocks of inner length $h=32$, so that a call spans one group as condition~(iii) requires, two-level Strassen with the overflow correction of Section~\ref{sec:correction} computes classical int8 bit for bit, provided its integer arithmetic runs in int32 and the quantization, the rescaling and the order of the fp32 additions across groups are those of classical int8.
A block sum of codes, formed exactly in a wider integer type, satisfies $|\hat A|,|\hat B|\le\NIclrActFourStrassenTwoHeadroomLA\cdot127=508$, which int8 cannot hold, so the unsplit computation violates condition~(i) of Proposition~\ref{prop:certificate}; the following split instead keeps every multiplicand in int8.
With $R_A=\lfloor(\hat A+128)/256\rfloor$ and $A_0=\hat A-256R_A$, and likewise for $\hat B$, the parts satisfy $A_0,B_0\in[-128,127]$ and $|R_A|,|R_B|\le2$, so each of the four products in~(\ref{eq:correction}) multiplies int8 values.
For one entry of a reconstructed block product over a block of inner length $h$, the absolute values of all the weighted scalar terms of~(\ref{eq:correction}) add up to at most $h\,(128+256\cdot2)^2=13{,}107{,}200$.
The combination $C_{ij}=\sum_r w_{r,ij}M_r$ of~(\ref{eq:bilinear}) adds at most $L_W=\NIclrActFourStrassenTwoHeadroomLW$ reconstructed products for an output block, so every partial sum stays below $16\cdot13{,}107{,}200<2^{31}$ and int32 computes each block of the output exactly, in any order of its integer additions.
By~(\ref{eq:correction}) and the algorithm's exact identity, each group's integer result is then the classical group product at code bound 127.
That product is bounded by $128\cdot127^2=2{,}064{,}512<2^{24}$, so its conversion to fp32 is exact, and the unchanged rescaling and fp32 order give classical int8's output bits.
The three layer slices of Appendix~\ref{app:cost} confirm this identity on each of them.

\paragraph{Model quality.}
Table~\ref{tab:quality} lists the excess values that Figure~\ref{fig:quality} plots, with the bf16 model's own negative log-likelihood beside them.
Table~\ref{tab:obqa-quality} gives ordinary task accuracy on the same first \NIclrActTwoQOneFourObqaItems{} OpenBookQA test items, using the natural inputs and likelihood scoring of Section~\ref{sec:support}.

\begin{table}[h]
\caption{Increase in negative log-likelihood per token over the bf16 model on eight 2048-token chunks of WikiText-2 ($10^{-3}$ nats; lower is better), raw and with every operator and the bf16 model at its own best temperature on the tested grid, chosen for each chunk on the other seven (adjusted). The bf16 NLL is raw. Each certified column is measured on the classical int8 operator its realization computes bit for bit, the one at $b=127$ with the overflow correction; all three operators use groups of 128 along the inner dimension.}
\label{tab:quality}
\centering
\small
\setlength{\tabcolsep}{4pt}
\newcolumntype{Q}{>{\raggedleft\arraybackslash}p{1.3cm}}
\begin{tabular}{@{}l c QQ QQ QQ@{}}
\toprule
 & bf16 NLL & \multicolumn{2}{c}{deployed FP8 kernel} & \multicolumn{2}{c}{certified, $b=\NIclrActFourStrassenTwoHeadroomBA$} & \multicolumn{2}{c}{certified, $b=127$} \\
\cmidrule(lr){3-4}\cmidrule(lr){5-6}\cmidrule(l){7-8}
 & (nats) & raw & adjusted & raw & adjusted & raw & adjusted \\
\midrule
Qwen2.5-14B-Instruct & \NIclrWThreeEightNineQFourteenNllCleanTOne & $\NIclrQualityQFourteenCubicRaw$ & $\NIclrQualityQFourteenCubicAdj$ & $\NIclrQualityQFourteenCintThirtyOneRaw$ & $\NIclrQualityQFourteenCintThirtyOneAdj$ & $\NIclrQualityQFourteenCintOneTwentySevenRaw$ & $\NIclrQualityQFourteenCintOneTwentySevenAdj$ \\
Llama-3.1-8B-Instruct & \NIclrWThreeEightNineLlEightNllCleanTOne & $\NIclrQualityLlEightCubicRaw$ & $\NIclrQualityLlEightCubicAdj$ & $\NIclrQualityLlEightCintThirtyOneRaw$ & $\NIclrQualityLlEightCintThirtyOneAdj$ & $\NIclrQualityLlEightCintOneTwentySevenRaw$ & $\NIclrQualityLlEightCintOneTwentySevenAdj$ \\
OLMo-2-7B-Instruct & \NIclrWThreeEightNineOlSevenNllCleanTOne & $\NIclrQualityOlSevenCubicRaw$ & $\NIclrQualityOlSevenCubicAdj$ & $\NIclrQualityOlSevenCintThirtyOneRaw$ & $\NIclrQualityOlSevenCintThirtyOneAdj$ & $\NIclrQualityOlSevenCintOneTwentySevenRaw$ & $\NIclrQualityOlSevenCintOneTwentySevenAdj$ \\
Qwen3-8B & \NIclrWThreeEightNineQThreeEightNllCleanTOne & $\NIclrQualityQThreeEightCubicRaw$ & $\NIclrQualityQThreeEightCubicAdj$ & $\NIclrQualityQThreeEightCintThirtyOneRaw$ & $\NIclrQualityQThreeEightCintThirtyOneAdj$ & $\NIclrQualityQThreeEightCintOneTwentySevenRaw$ & $\NIclrQualityQThreeEightCintOneTwentySevenAdj$ \\
\bottomrule
\end{tabular}
\end{table}

\begin{table}[h]
\caption{OpenBookQA accuracy (\%) on the first \NIclrActTwoQOneFourObqaItems{} test items. The int8 columns are measured on the classical reference operators that the certified realizations compute, with groups of 128 and the quantization specification above. Each answer is chosen by its summed token log probabilities, with the first candidate selected in an exact tie.}
\label{tab:obqa-quality}
\centering
\small
\setlength{\tabcolsep}{6pt}
\begin{tabular}{@{}l r r r r@{}}
\toprule
 & & & \multicolumn{2}{c}{classical int8 reference} \\
\cmidrule(l){4-5}
model & bf16 & deployed FP8 & $b=\NIclrActFourStrassenTwoHeadroomBA$ & $b=127$ \\
\midrule
Qwen2.5-14B-Instruct & \NIclrActTwoQOneFourObqaPlausibleAccPctClean & \NIclrActTwoQOneFourObqaPlausibleAccPctCubic & \NIclrLadderQfourteenAccPctCintThirtyOne & \NIclrAccQfourteenIntFull \\
Llama-3.1-8B-Instruct & \NIclrAccLlamaBf & \NIclrAccLlamaFp & \NIclrAccLlamaIntLow & \NIclrAccLlamaIntFull \\
OLMo-2-7B-Instruct & \NIclrAccOlmoBf & \NIclrAccOlmoFp & \NIclrAccOlmoIntLow & \NIclrAccOlmoIntFull \\
Qwen3-8B & \NIclrAccQthreeBf & \NIclrAccQthreeFp & \NIclrAccQthreeIntLow & \NIclrAccQthreeIntFull \\
\bottomrule
\end{tabular}
\end{table}

\paragraph{Qwen3-8B at code bound \NIclrActFourStrassenTwoHeadroomBA.}
On Qwen3-8B the raw excess at code bound \NIclrActFourStrassenTwoHeadroomBA{} is negative; its bf16 model fits this text best at temperature \NIclrWThreeEightNineQThreeEightTstarClean{} on the tested grid.

\section{Cost details}\label{app:cost}

\paragraph{The correction's multiplications.}
Section~\ref{sec:cost} counts the multiplications of two-level Strassen with the overflow correction of Section~\ref{sec:correction} at code bound 127 on one slice from each of three linear layers: the down projection of layer 4 of Qwen2.5-14B-Instruct, the gate projection of layer 2 of Qwen3-8B and the down projection of layer 2 of Qwen2.5-3B, each with 512 token rows and 1024 output channels, at groups of 128 and blocks of inner length 32.
On each slice the corrected computation reproduces classical int8 at code bound 127 bit for bit.
Every reconstructed product and group output also matches an exact integer-valued float64 reference, with the integer bounds below $2^{53}$.
The entrywise bound $\sum_r |w_{r,ij}M_r|<2^{24}$ keeps every decoded partial sum exact in fp32 on these slices, and the final rescaled outputs have identical bit patterns\ifhmode\unskip\fi.
Its correction terms, the products with $R_A$ or $R_B$ in~(\ref{eq:correction}), are counted in two ways.
Counting only the nonzero entries of $R_A$ and $R_B$ gives \NIclrCostCorrOneTwoSevenSparseMin--\NIclrCostCorrOneTwoSevenSparseMax{} of classical int8's multiplications in total; counting every row of $R_A$ and column of $R_B$ that holds a nonzero entry, as a dense product over those rows and columns would, gives \NIclrCostCorrOneTwoSevenGatheredMin--\NIclrCostCorrOneTwoSevenGatheredMax{} times as many.

\paragraph{The timings.}
Every timing of Section~\ref{sec:cost} multiplies two $4096\times4096$ matrices on one H20 GPU from prequantized codes and scales.
The slowdowns relative to classical int8 at the same code bound and group size compare the fastest configurations found for the two kernels in the respective tile sweeps of one run.
Only configurations whose fast and classical outputs have identical bit patterns are timed; the same check must reject a deliberate sign flip in one decoding coefficient.
The classical kernel is separately checked against a grouped fp32 reference at relative maximum error below $10^{-5}$.
Each configuration receives eight warmup runs followed by 30 CUDA-event measurements, each preceded by zeroing a 96\,MB buffer to evict the cache; we report their upper median.
In that sweep, one level of Strassen at code bound 63 takes \NIclrCostStrassenOneQGOneTwoEight, \NIclrCostStrassenOneQGTwoFiveSix{} and \NIclrCostStrassenOneQGFiveOneTwo{} times as long as classical int8 at groups of 128, 256 and 512, and two levels at code bound \NIclrActFourStrassenTwoHeadroomBA{} take \NIclrCostTwoLevelVsClassical{} times as long at groups of 256; in every configuration a call spans one group, within the conditions of Proposition~\ref{prop:certificate}.
Part of this cost comes from splitting a call into short block products: when the executor of our fast kernels runs the \NIclrActFourStrassenTwoMultsClassical{} block products of the classical algorithm, with nothing of the fast algorithm in it, it already takes \NIclrCostExecutorFloor{} times as long as our classical int8 kernel at groups of 256.
Our classical int8 kernel, which applies the group scales as well, takes \NIclrCostAnchorClassicalUsMin--\NIclrCostAnchorClassicalUsMax{}~$\mu$s at this shape across the three group sizes, while one call of the cuBLASLt int8 matrix multiplication that PyTorch exposes, which applies none, takes \NIclrCostAnchorVendorUs{}~$\mu$s in the same run.

\paragraph{The order of the products.}
The spreads reported here are over the orders we timed at one fixed kernel configuration for each algorithm, for two $4096\times4096$ matrices at groups of 256 and code bound \NIclrActFourStrassenTwoHeadroomBA: the slowest order takes \NIclrCostOrderBandSxSPooled{} times as long as the fastest for two-level Strassen, and \NIclrCostOrderBandCxSPooled{} times for a hybrid of one Strassen level and one classical level.
In two further pools, the fastest order could be found without timing.
For each algorithm we drew \NIclrCostRegretNFresh{} orders absent from the timings behind those spreads, and compiled and timed each, recording the local memory per thread that the compiler allocated to it.
The rule, fixed before the orders were drawn, picks the order with the least local memory, which needs only the compilation.
In both pools it picked the fastest of the \NIclrCostRegretNFresh{}: the picked order's time over the pool's best is \NIclrCostRegretSxS{} for two-level Strassen and \NIclrCostRegretCxS{} for the hybrid.
Since every order computes the same bits, such a search cannot change what the model computes; \citet{yang2026taming} likewise confine GPU autotuning to configurations that compute the same bits.

\section{Reproducibility details}\label{app:repro}

\paragraph{From stored outputs to printed numbers.}
Every result the paper prints as a numeral is generated from the stored output of the program that produced it, by a rule that reads the value from that output or computes it; the \NIclrActOneQOneFourOrbitSpread-fold range of Section~\ref{sec:magnitude}, for example, is read from the stored summary of the runs of the sign variants and rounded to one decimal.
Table~\ref{tab:repro} gives, for each family of results, the arithmetic that produced it and the hardware it ran on.

\begin{table}[h]
\caption{The arithmetic and hardware behind each family of results, with the sections and appendices where the results appear. Outside the layers a row names, the models run in bf16, and likelihoods are computed from their logits in fp32.}
\label{tab:repro}
\centering
\small
\setlength{\tabcolsep}{4pt}
\begin{tabular}{@{}>{\raggedright\arraybackslash}p{0.31\textwidth} >{\raggedright\arraybackslash}p{0.15\textwidth} >{\raggedright\arraybackslash}p{0.37\textwidth} >{\raggedright\arraybackslash}p{0.09\textwidth}@{}}
\toprule
results & where & arithmetic & hardware \\
\midrule
Tests of prefix invariance on OpenBookQA and ARC-Easy, windows and packed requests & \ref{sec:support}, \ref{sec:reach}; \ref{app:scoring}, \ref{app:exemplars}, \ref{app:windows}, \ref{app:packed} & fast realizations: block sums rounded to FP8 and multiplied by DeepGEMM's FP8 kernel, products combined in fp32; controls: the bf16 model and DeepGEMM's FP8 kernel on the whole product & GPU \\
\addlinespace[2pt]
How a later row reaches an earlier output & \ref{sec:reach}; \ref{app:rowmap} & FP8 rounding simulated by casting to FP8 (e4m3) and back; products in fp32 & CPU \\
\addlinespace[2pt]
Sign variants on seven models, four variants rescored with the later text replaced by filler, summation orders and repeated runs & \ref{sec:magnitude}; \ref{app:models}, \ref{app:order} & block sums rounded to FP8 and multiplied by DeepGEMM's FP8 kernel, products combined in fp32 & GPU \\
\addlinespace[2pt]
The outlier ratio of the counterexample & \ref{app:models} & the bf16 model's activations, row norms in fp32 & GPU \\
\addlinespace[2pt]
Stability criteria, code bounds and headroom, and the code bounds of the AlphaTensor algorithms & \ref{sec:magnitude}, \ref{sec:certificate}, \ref{sec:correction}; \ref{app:criteria} & exact arithmetic on the coefficients & CPU \\
\addlinespace[2pt]
Synthetic ensembles and the rotation & \ref{app:criteria}, \ref{app:order} & FP8 rounding simulated by casting to FP8 (e4m3) and back, products in fp32, against a float64 reference & CPU \\
\addlinespace[2pt]
Certified construction on tiles: exact trials, sign variants, pairs of an earlier and a later row, and the control that breaks condition~(iii) & \ref{sec:certificate} & integer products in int64, rescaled in float64 and accumulated in fp32; the FP8 comparison and the control round their block sums in simulated floating point & CPU \\
\addlinespace[2pt]
Checks of replaced layers inside four models: every invocation compared bit for bit with classical int8 & \ref{sec:certificate}; \ref{app:construction} & int8 arithmetic emulated exactly in fp32, with TF32, the GPU's reduced-precision fp32 mode, disabled & GPU \\
\addlinespace[2pt]
The OpenBookQA test on four models and model quality at code bounds \NIclrActFourStrassenTwoHeadroomBA{} and 127 & \ref{sec:certificate}, \ref{sec:quality}; \ref{app:construction} & the classical int8 operator the certified realization computes, emulated exactly in fp32; references: the bf16 model and DeepGEMM's FP8 kernel & GPU \\
\addlinespace[2pt]
The overflow correction on three layer slices: its bitwise check and its multiplication counts & \ref{sec:correction}, \ref{sec:cost}; \ref{app:construction}, \ref{app:cost} & the corrected and the classical computation in fp32, compared bit for bit; multiplications counted over the entries, or the rows and columns, that overflow & CPU \\
\addlinespace[2pt]
Timings, configuration sweeps and product orders & \ref{sec:cost}; \ref{app:cost} & int8 Triton kernels with int32 accumulation, timed against our classical int8 kernel, with PyTorch's int8 matrix product as a timing reference & GPU \\
\addlinespace[2pt]
Multiplication counts without the correction & \ref{sec:certificate}, \ref{sec:cost} & counted from the algorithms & -- \\
\bottomrule
\end{tabular}
\end{table}

\paragraph{What holds on any hardware.}
Three kinds of result are exact and so hold on any hardware: the coefficient quantities, the multiplication counts, and the certified operator identities (the bitwise equality of the certified construction and classical int8 on the tiles and in every checked invocation inside the models).
Proposition~\ref{prop:certificate} guarantees that identity for any implementation whose integer arithmetic is exact and whose quantization, rescaling and accumulation follow the same specification $c$, and the models' fp32 emulation is such an implementation because every integer intermediate stays below $2^{24}$.

\paragraph{Software.}
The available environment records list the following software: the runs of the sign variants used PyTorch 2.11.0 \citep{paszke2019pytorch} with CUDA 12.8, the repeated runs PyTorch 2.13.0 with CUDA 12.9 and DeepGEMM 0.1.5.post3, the quality measurements PyTorch 2.13.0 with CUDA 13.0, the packed requests transformers 5.3.0 \citep{wolf2019huggingface}, and the timings PyTorch 2.13.0 with CUDA 12.9 and Triton 3.7.1.
The timings and the quality measurements record NVIDIA H20 GPUs.

\paragraph{Models and data.}
The experiments on one model use Qwen2.5-14B-Instruct, except the record of key and value projections in Appendix~\ref{app:packed}, which uses Qwen2.5-3B.
The window test adds Qwen2.5-3B, Llama-3.1-8B-Instruct, OLMo-2-7B-Instruct (the 1124 release) and Qwen3-8B, and the packed requests use Qwen2.5-14B-Instruct, Qwen2.5-3B and OLMo-2-7B-Instruct; the sign variants on seven models add Qwen2.5-7B-Instruct and Yi-6B to the window test's five; Section~\ref{sec:construction}'s runs use Qwen2.5-14B-Instruct, Llama-3.1-8B-Instruct, OLMo-2-7B-Instruct and Qwen3-8B; the overflow correction's slices come from Qwen2.5-14B-Instruct, Qwen3-8B and Qwen2.5-3B; and the counterexample involving the outlier ratio uses Qwen2.5-32B-Instruct, Mistral-7B-v0.1 and Llama-2-7b-hf.
The multiple-choice tests use the first \NIclrActTwoQOneFourObqaItems{} test items of OpenBookQA and ARC-Easy, with C4 validation text as the filler (Appendix~\ref{app:scoring}); the window test (Appendix~\ref{app:windows}) and the rescore of four variants in Section~\ref{sec:magnitude} alternate WikiText-2 test and C4 validation text; and the perplexity and quality measurements over whole sequences use 2048-token chunks of the WikiText-2 test set (Appendices~\ref{app:models} and~\ref{app:construction}).

\end{document}